\renewenvironment{proof}[1][Proof]{\noindent\textit{#1. } }{\hfill$\square$}
 \newtheoremstyle{theorem}{6pt}{6pt}{\rm}{}{\sffamily}{ }{ }{}
 \theoremstyle{theorem}
  \newtheoremstyle{thm}{6pt}{6pt}{\rm}{}{\sffamily}{ }{ }{}
 \theoremstyle{thm}
 \newtheoremstyle{lemma}{6pt}{6pt}{\rm}{}{\sffamily}{ }{ }{}
 \theoremstyle{lemma}
 \newtheorem{lemma}{\sc Lemma}[section]
 \newtheoremstyle{lem}{6pt}{6pt}{\rm}{}{\sffamily}{ }{ }{}
 \theoremstyle{lem}
\newtheoremstyle{case}{6pt}{6pt}{\rm}{}{}{. }{ }{}
 \theoremstyle{case}
 \newtheoremstyle{statement}{6pt}{6pt}{\rm}{}{\sffamily}{ }{ }{}
\theoremstyle{statement}
 \newtheoremstyle{corollary}{6pt}{6pt}{\rm}{}{\sffamily}{ }{ }{}
 \theoremstyle{corollary}
  \newtheoremstyle{defi}{6pt}{6pt}{\rm}{}{\sffamily}{ }{ }{}
 \theoremstyle{defi}
  \newtheoremstyle{cor}{6pt}{6pt}{\rm}{}{\sffamily}{ }{ }{}
 \theoremstyle{cor}
\newtheoremstyle{example}{6pt}{6pt}{\rm}{}{\sffamily}{ }{ }{}
\theoremstyle{example}
\newtheoremstyle{remark}{6pt}{6pt}{\rm}{}{\sffamily}{ }{ }{}
\theoremstyle{remark}
\newtheoremstyle{approximation}{6pt}{6pt}{\rm}{}{\sffamily}{ }{ }{}
\theoremstyle{approximation}
\newtheoremstyle{scheme}{6pt}{6pt}{\rm}{}{\sffamily}{ }{ }{}
\theoremstyle{scheme}
\newtheoremstyle{Algorithm}{6pt}{6pt}{\rm}{}{\sffamily}{ }{ }{}
\theoremstyle{Algorithm}
 \newtheoremstyle{Remark}{6pt}{6pt}{\rm}{}{\sffamily}{ }{ }{}
 \theoremstyle{Remark}
\newtheoremstyle{Lemma}{6pt}{6pt}{\rm}{}{\sffamily}{ }{ }{}
\theoremstyle{Lemma}
\newtheoremstyle{Assumption}{6pt}{6pt}{\rm}{}{\sffamily}{ }{ }{}
\theoremstyle{Assumption}
\newtheoremstyle{Proposition}{6pt}{6pt}{\rm}{}{\sffamily}{ }{ }{}
\theoremstyle{Proposition}
\newtheoremstyle{prop}{6pt}{6pt}{\rm}{}{\sffamily}{ }{ }{}
\theoremstyle{prop}
\newtheoremstyle{rem}{6pt}{6pt}{\rm}{}{\sffamily}{ }{ }{}
 \theoremstyle{rem}
\newtheoremstyle{hypo}{6pt}{6pt}{\rm}{}{\sffamily}{ }{ }{}
 \theoremstyle{hypo}
  \newtheoremstyle{Step}{6pt}{6pt}{\rm}{}{}{ }{ }{}
 \theoremstyle{Step}
 \newtheoremstyle{lema}{6pt}{6pt}{\rm}{}{\sffamily}{ }{ }{}
 \theoremstyle{lema}
\newcommand{\rhoden}{\text{\usefont{OML}{cmr}{m}{n}\symbol{37}}}
\numberwithin{equation}{section}
\newcommand{\vect}[1]{\mathbf{#1}}
\newcommand{\vectornorm}[1]{\left|\left|#1\right|\right|}
\newcommand{\eps}{\varepsilon}
\newcommand{\p}{\partial}
\newcommand{\dd}{\mathrm{d}}
\newcommand{\epuck}{\emph{E-Puck} }
\newcommand{\epuckper}{\emph{E-Puck}}
\newcommand{\tar}{\mathcal{T}}
\newcommand{\refl}{\mathcal{R}}
\newcommand{\ntar}{\vect{n}_\tar}
\newcommand{\nref}{\vect{n}_\refl}
\newcommand{\bdtar}{\partial\Omega_\tar}
\newcommand{\bdref}{\partial\Omega_\refl}
\newcommand{\D}{\mathrm{d}}
\newcommand{\DD}[1]{\,\D #1}
\newcommand{\pd}[2]{\dfrac{\partial#1}{\partial#2}}
\newcommand{\nablax}{\nabla_{\vect{x}}}
\newcommand{\Laplace}{\Delta}
\newcommand{\R}{\mathbb{R}}
\newcommand{\unit}[1]{\,\mathrm{#1}}
\newcommand{\changed}[1]{{#1}}
\begin{document}

\title{Mathematical Modelling of Turning Delays in Swarm Robotics}

\author{{\sc Jake P. Taylor-King, Benjamin Franz, Christian A. Yates 
and Radek Erban\raise 1mm \hbox{\thanks{e-mail:erban@maths.ox.ac.uk}}}
\\[6pt]
Mathematical Institute, University of Oxford, \\[1pt]
Andrew Wiles Building, Radcliffe Observatory Quarter\\[1pt]
Woodstock Road, Oxford, OX2 6GG, United Kingdom \\[6pt]
}

\pagestyle{headings}

\markboth{TAYLOR-KING, FRANZ, YATES AND ERBAN}{\rm 
Mathematical Modelling of Turning Delays in Swarm Robotics}

\maketitle

\begin{abstract}
{We investigate the effect of turning delays on the behaviour of groups of 
differential wheeled robots and show that
the group-level behaviour can be described by a transport equation
with a suitably incorporated delay. The results of our mathematical
analysis are supported by numerical simulations and experiments with \epuck robots.
The experimental quantity we compare to our revised model is the mean time for 
robots to find the target area in an unknown environment.
The transport equation with delay better predicts
the mean time to find the target than the standard transport 
equation without delay.}
{Velocity jump process, Swarm robotics, Transport equation with delay}
\end{abstract}        

\section{Introduction}
\label{sec:introduction}
Much theory has been developed for the coordination and control of 
distributed autonomous agents, where collections of robots are acting 
in environments in which only short-range communication is possible \citep{RW}. 
By performing actions based on the presence or absence of signals, 
algorithms have been created to achieve some greater group level task; 
for instance, to reconnoitre an area of interest whilst collecting 
data or maintaining formations \citep{DOK}. In this paper, we will
investigate an implementation of searching algorithms, similar to those
used by flagellated bacteria, in a robotic system.

Many flagellated bacteria such as \emph{Escherichia coli (E. coli)} use 
a run-and-tumble searching strategy in which movement consists of 
more-or-less straight runs interrupted by brief tumbles \citep{Berg:1983:RWB}.  
When their motors rotate counter-clockwise the flagella form a 
bundle that propels the cell forward with a roughly constant speed; 
when one or more flagellar motors rotate clockwise the bundle flies 
apart and the cell `tumbles' \citep{kim2003msm}. Tumbles 
reorient the cell in a more-or-less uniformly-random direction
(with a slight bias in the direction of the previous run) for the next
run \citep{Berg:1972:CEC}. In the absence of signal gradients the random 
walk is unbiased, with a mean run time $\sim 1\, \text{sec}$ and a tumble time 
$\sim 0.1\,\text{sec}$. However, when exposed to an external signal gradient, 
the cell responds by increasing (decreasing) the run length when moving 
towards (away from) a favourable direction, and therefore the random 
walk is biased with a drift in that direction
\citep{Berg:1975:HBS,Koshland:1980:BCM}. Similar behaviour can
be observed in  swarms of animals avoiding 
predators and coordinating themselves within a group \citep{Couzin}. 

The behaviour of \emph{E. coli} is often modelled as a velocity jump
process where the time spent tumbling is neglected as it is much smaller
than the time spent running 
\citep{Othmer:1988:MDB,Erban:2004:ICB}.
In such a velocity jump process, particles follow a given velocity
$\vect{u}$ from a set of allowed velocities $V\subset\mathbb{R}^d$, $d=2,3,$ 
for a finite time. The particle changes velocity probabilistically according 
to a Poisson process with intensity $\lambda$, i.e. the mean run-duration 
is $1/\lambda$. A new velocity 
$\vect{v}$ is chosen according to the turning kernel 
$T(\vect{v},\vect{u}): V\times V \to {\mathds R}$.
Formally the turning kernel represents the probability of 
choosing $\vect{v}$ as the new velocity given that
the old velocity was $\vect{u}$. Therefore, it is necessary 
that $\int_V{T(\vect{v},\vect{u}) \, \mbox{d} \vect{v}}=1$ and $T \geq 0$.

Denoting by $p(t, \vect{x},\vect{v})$ the density
of bacteria which are, at time $t$, at position $\vect{x}$
with velocity $\vect{v}$, the velocity jump process can be described
by the transport equation \citep{Othmer:1988:MDB}
\begin{equation}
\frac{\partial p}{\partial t}(t, \vect{x},\vect{v}) +
\vect{v}\cdot\nabla_{\vect{x}} p(t, \vect{x},\vect{v}) = -\lambda
p(t, \vect{x},\vect{v}) 
+\changed{ \lambda} \int_{V} T(\vect{v}, \vect{u}) \, p(t, \vect{x},\vect{u}) \, \dd
\vect{u}. \label{eq:classicalvjp}
\end{equation}
Assuming that $\lambda$ and $T$ are constant, one can show that 
the long-time behaviour of the density 
$\rhoden(t, \vect{x}) = \int_V  p(t, \vect{x},\vect{v}) \, \mbox{d} \vect{v}$
is given by the diffusion equation \citep{Hillen:2000:DLT}.
If $\lambda$ depends on an external signal (e.g. nutrient concentration), 
then the resulting velocity jump process is biased and its long
time behaviour can be described by a drift-diffusion equation for 
$\rhoden$ \citep{Othmer:2002:DLT,Erban:2005:STS}.

In this paper, we will study an experimental system based on \epuck 
robots \citep{ePuck}. We programme these differential wheeled robots 
to follow a run-and-tumble searching strategy in order to find a given target set. 
\changed{In the first set of experiments, we }concentrate on the simplest possible scenario: 
an unbiased velocity jump process in two spatial dimensions with the fixed 
speed $s\in\mathbb{R}^+$, the constant mean run time 
$\lambda^{-1} \in\mathbb{R}^+$, and the turning kernel which is independent of $\vect{u}$
\begin{equation}
\label{eq:turningKernel}
	T(\vect{v}, \vect{u}) 
	= \frac{\delta(\vectornorm{\vect{v}}-s)}{2\pi s}\,. 
\end{equation}
A special feature of the \epuck robots is that they can perform turns on 
the spot 
as in the classical velocity jump process described by \eqref{eq:classicalvjp}.
In this paper, we will investigate in how far 
\eqref{eq:classicalvjp} presents a good description
of the behaviour of the robotic system and we will develop an extension of
\eqref{eq:classicalvjp} that results in a better match between experimental
data and mathematical model. \changed{We then apply this extended velocity jump theory
to a biased random walk through the incorporation of signals into 
the experimental set up.}

The paper is organized as follows: in Section \ref{sec:experiments}, 
we introduce the experimental system as well as the obtained data.
This data is compared to the classical velocity jump theory. In 
Section \ref{sec:delay}, we extend the velocity jump theory to 
include finite turning times \changed{for unbiased random walks} and compare 
it to our experimental data, showing a much improved match. 
\changed{This new theory is in Section~\ref{sec:signal} applied to a situation
with an external signal and therefore a biased random walk.}
We conclude our paper, in Section \ref{secdiscussion}, by discussing the implications of our results .

\section{Velocity jump processes in experiments with robots}
\label{sec:experiments}
Equation \eqref{eq:classicalvjp} introduced the density behaviour 
of the general velocity jump process that we are aiming to investigate 
using the experimental set-up described
in Section~\ref{subsec:epuck}. 
In particular, we will \changed{initially} concentrate on a simple unbiased velocity jump process 
with the fixed speed $s\in\mathbb{R}^+$, the mean run duration
$\lambda^{-1} \in\mathbb{R}^+$ and the turning kernel (\ref{eq:turningKernel}). 
\changed{In Section~\ref{sec:signal} we will present situations, where the
turning frequency changes according to an external signal, as is indeed common 
in biological applications \citep{Erban:2005:STS}.}
This fixed-speed velocity jump process can be viewed as 
a starting point 
for considering more complex searching algorithms. We will demonstrate 
that by including a small modification (the introduction of a delay 
to the turning kernel), we can alter this simple velocity jump process 
so that it models the behaviour of the \epuck robots.

We are interested in comparing the idealised velocity jump process, 
given in \eqref{eq:classicalvjp}--\eqref{eq:turningKernel}, to 
robotic experiments. Due to a restriction in numbers of
robots, one cannot feasibly talk about a ``density'' of robots that 
could be compared
to $p(t, \vect{x}, \vect{v})$ as given in \eqref{eq:classicalvjp}. 
Therefore, our experiments
concentrate on the escape of robots from a given domain. We may interpret 
this as the target
finding ability of the \epuck robots. Using these experiments, we can 
infer data both on the flux 
at the barrier and the exit times and can compare those to numerical 
results of velocity jump
processes in Sections~\ref{subsec:comparison1} and \ref{subsec:comparison1b}.

\subsection{Experimental set-up and procedure}
\label{subsec:epuck}

To obtain the empirical data, an experimental system consisting of 
16 \epuck robots
was used. \epuck robots are small differential wheeled robots with 
a programmable microchip \citep{ePuck}. 
The diameter of each robot is $\eps = 75\,\text{mm}$ with a height of 
$50\,\text{mm}$ and weight of $200\,\text{g}$. Throughout the 
experiments, the speed was chosen to be $s = 5.8 \times 10^{-2} \,\text{m}/\text{sec}$. 
The robots turn
with an angular velocity $\omega = 4.65/\text{sec}$. 
Full specifications along 
with a picture are given in \ref{app:epuck}.

In the experiments, we use a rectangular arena $\Omega$ with walls 
on three of the 4 edges and 
an opening to the target area $\tar$ along the fourth 
edge\footnote{We have $\Omega\cap\tar = \emptyset$, 
but $\overline{\Omega}\cap\overline{\tar} \neq \emptyset$, 
i.e. $\Omega$ and $\tar$ touch but do not overlap.}. 
A diagram of the arena along with the notation used can be seen in 
Figure~\ref{fig:arenaSketch} and a photo 
is shown in Figure~\ref{CollPhotoArena}(b) in \ref{app:epuck}.
When considering such an arena, one has to distinguish between the size 
of the physical arena and
the effective arena (shown in blue in Figure~\ref{fig:arenaSketch}) that 
the robot centres can
occupy. The effective arena used in the experiments has the dimensions 
$L_x = 1.183\,\text{m}$ and 
$L_y = 1.145\,\text{m} = L_x - \eps/2$. The reflective (wall) boundary 
and the target boundary will be denoted as
$\bdref$ and $\bdtar$, respectively, and can be defined as
\begin{equation}
\bdtar = \overline{\Omega}\cap\overline{\tar}\,,
\qquad \bdref = \partial\Omega \backslash \bdtar\,.
\label{boundarypartition}
\end{equation}
Throughout the remainder of the paper, we will also use $\nref$ 
(resp. $\ntar$) to denote the outwards pointing normal on the 
reflective (resp. target) boundary.

During the experiments, robots were initialised inside a removable 
square pen $\Omega_0$ of effective edge length $L_0 = 0.305\,\text{m}$,
shown in Figure~\ref{fig:arenaSketch} and Figure~\ref{CollPhotoArena}(b) 
in \ref{app:epuck}. A short period of free movement within the 
pen before its removal allowed us to reliably release all robots into 
the full domain $\Omega$ at the same time as well as randomising their 
initial positions within the pen. We recorded the exit time for each of the robots, 
when its geometric centre entered the target area $\tar$. Each repetition 
of the experiment was continued
for $300\,\text{sec}$ or until all $16$ robots had left the arena.
\begin{figure}[t] 
\bigskip
\bigskip
\centering
\begin{overpic}[width=0.75\textwidth]{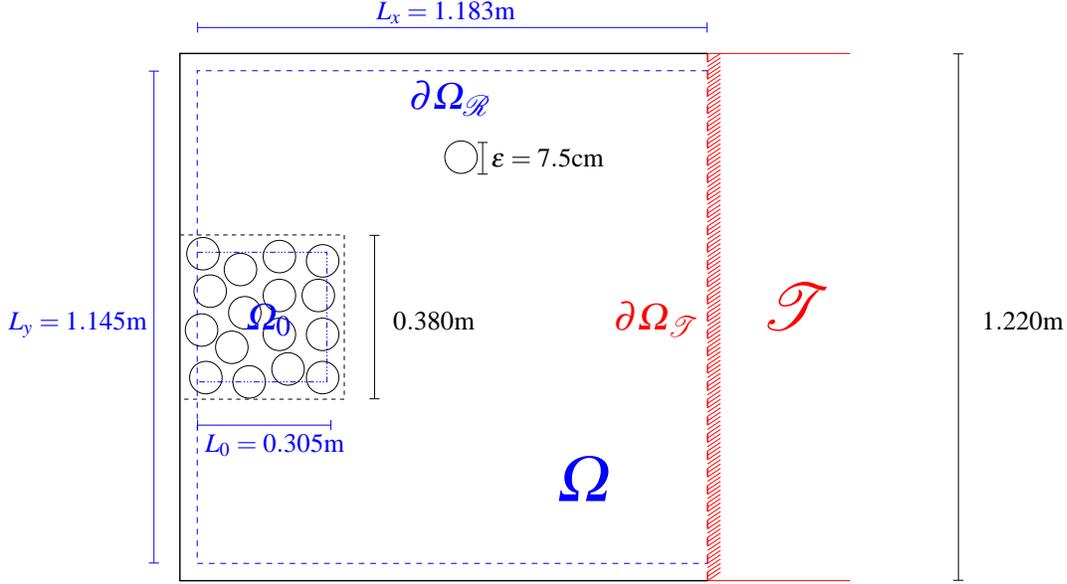}
	\put(75,31){\Huge$\color{red}\tar$}
	\put(50,10){\Huge$\color{blue}\Omega$}
	\put(57,31){\Large$\color{red}\bdtar$}
	\put(32,58){\Large$\color{blue}\bdref$}
	\put(42,51){$\eps = 7.5\text{cm}$}
	\put(-17,31){$\color{blue}L_y = 1.145\text{m}$}
	\put(28,69){$\color{blue}L_x = 1.183\text{m}$}
	\put(7,16){$\color{blue}L_0 = 0.305\text{m}$}
	\put(12,31){\Large$\color{blue}\Omega_0$}
	\put(102,31){$1.220\text{m}$}
	\put(30,31){$0.380\text{m}$}
\end{overpic}
\bigskip
\bigskip
\caption{{\it Schematic showing of the experimental set-up along 
with the notation used throughout this paper. Dotted border lines 
correspond to the effective arena and bold lines to the actual arena. 
For further details see the text. }}
\label{fig:arenaSketch}
\end{figure}%

The robots were programmed using \texttt{C} and a cross-compiling tool, 
with the firmware being transferred onto the
robots via bluetooth. A pseudo-code of the algorithm implemented on the 
robots is shown in Table~\ref{alg:velocityJump}.
This algorithm represents a velocity jump process in the limit as 
$\Delta t\to 0$ \citep{Erban:2006:EFC}, and gives a good
approximation as long as $\lambda\,\Delta t \ll 1$. In the experiments 
we used $\lambda = 0.25\,\text{sec}^{-1}$ implying
a mean run duration of $4\,\text{sec}$ and $\Delta t = 0.1\,\text{sec}$, 
resulting in $\lambda\,\Delta t = 2.5 \times 10^{-2} \ll 1$.
Note that, $s$, $\omega$ and $\lambda$ can be changed on a software 
level on an \epuckper. For $\omega$ we chose
the maximum possible value, whilst for $s$ we chose a value below 
the physical maximum. Choosing a lower velocity means that we mitigate 
the effects of acceleration and deceleration to the running speed since 
the robots cannot do this instantaneously as the basic velocity jump 
model assumes. In a practical setting, one could interpret $s$ and $\omega$ 
as given characteristics of the system, whilst $\lambda$ can be chosen in a 
way that accelerates the
target finding process for the given application with the choice 
of $\lambda$ likely to represent a trade-off
between sampling an area and time spent reorienting.

\begin{table}[ht]
\noindent\framebox{%
\hsize=0.97\hsize
\vbox{
\leftskip 6mm
\parindent -6mm
{\bf [1]} Robot is started at position $\vect{x}(0)\in\Omega_0$. 
Generate $r_1\in[0, 1]$ uniformly at 
random, set $t = 0$ and 
\vskip -2mm
\[
	\vect{v}(0) = s\,\left(\begin{array}{c}\cos 2\pi r_1\\ \sin 2\pi r_1 
	\end{array}\right)\,.
\]

{\bf [2]} Position is updated according to
$\vect{x}(t+\Delta t)=\vect{x}(t)+\Delta t  \ \vect{v}(t)\,.$
\smallskip

{\bf [3]} Generate $r_2\in[0, 1]$ uniformly at random.
If $r_2<\lambda\,\Delta t$, then generate $r_3\in[0, 1]$ uniformly at random
and set 
\vskip -5mm
\[
	\vect{v}(t+\Delta t) = s\,\left(\begin{array}{c}\cos 2\pi r_3\\ 
	\sin 2\pi r_3 \end{array}\right)\,.
\]

{\bf [4]} Set $t=t + \Delta t$ and continue with step [{\bf 2}].
\par \vskip 0.8mm}
}\vskip 1mm
\caption{{\it An algorithmic implementation of the velocity jump process.}}
\label{alg:velocityJump}
\end{table}

In addition to the algorithm in Table~\ref{alg:velocityJump}, robots were also made to implement an 
obstacle-avoidance strategy using the four proximity sensors placed at 
angles $\pm 17.5\degree$ and $\pm 47.5\degree$ from the centre axis
in the front part of the \epuckper. Reflective
turns were carried out based on the signals received at these sensors. 
As the robots are incapable of distinguishing between walls and other 
robots, those
reflections occur whether a robot collides with the wall $\bdref$ or 
another robot. As a consequence we discuss the importance of robot-robot 
collisions on the experimental results \changed{in the next section.}

\subsection{Relevance of collisions for low numbers of robots}
\label{subsec:collisions}

For non-interacting particles which can change 
direction instantaneously, equation \eqref{eq:classicalvjp} 
accurately describes the mesoscopic density through time. 
However, in our experiments the robots undergo reflective 
collisions when they come into close contact, rather than 
passing through or over each other. For a low number 
of particles, we used Monte Carlo simulations to demonstrate 
that collisions are not the dominant behaviour and have 
little effect on the distribution of particles.
\begin{figure}[ht]
\centering
\subfigure[]{
\begin{overpic}[width=0.31\textwidth]{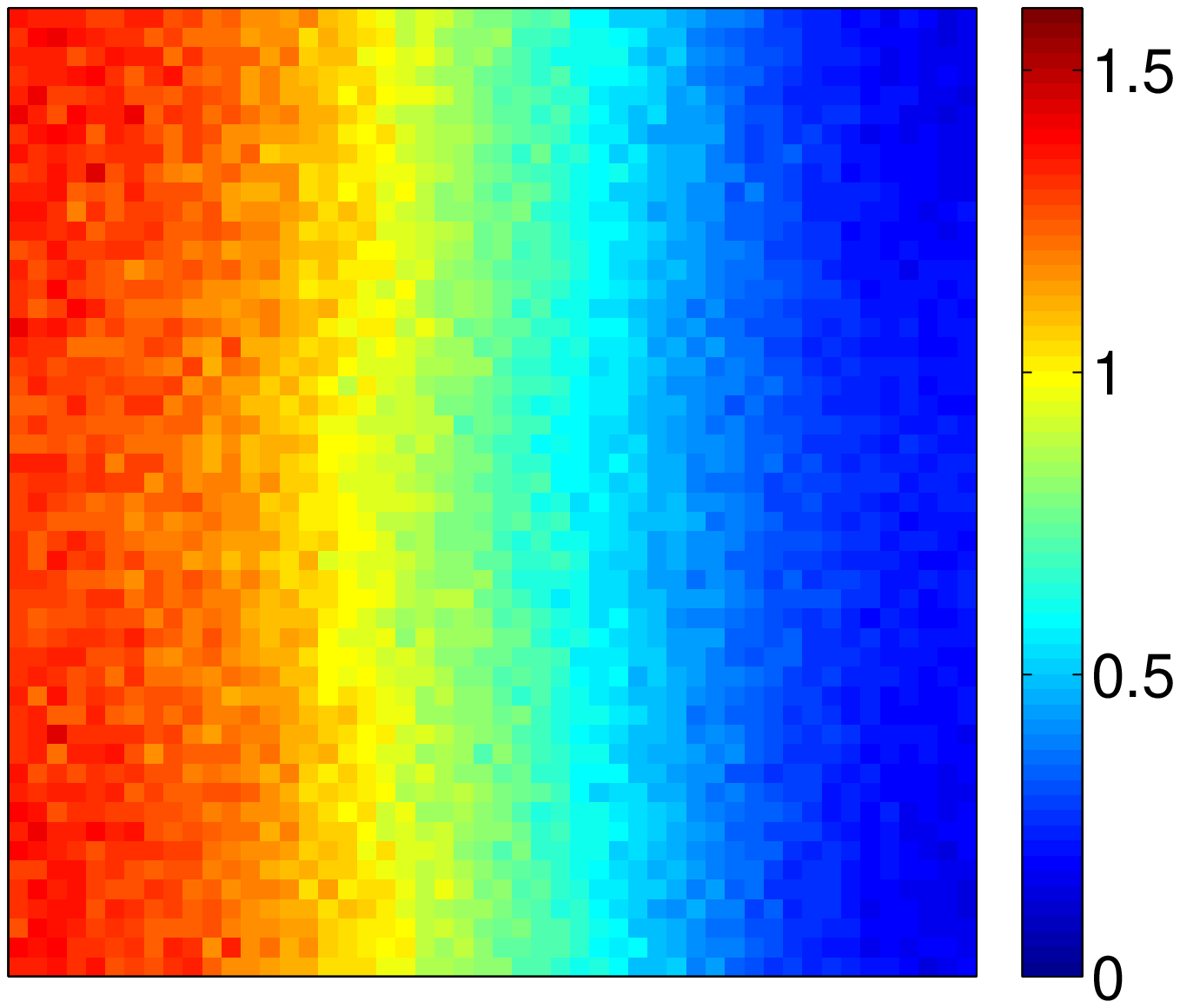}
	\put(5,10){\small$0$}
	\put(-3,41){\small$L_y/2$}
	\put(4,73){\small$L_y$}	
	\put(12,1){\small$0$}
	\put(40,1){\small$L_x/2$}
	\put(74,1){\small$L_x$}
\end{overpic}
\label{subfig:KMC:pointParticles}
}
\subfigure[]{
\begin{overpic}[width=0.31\textwidth]{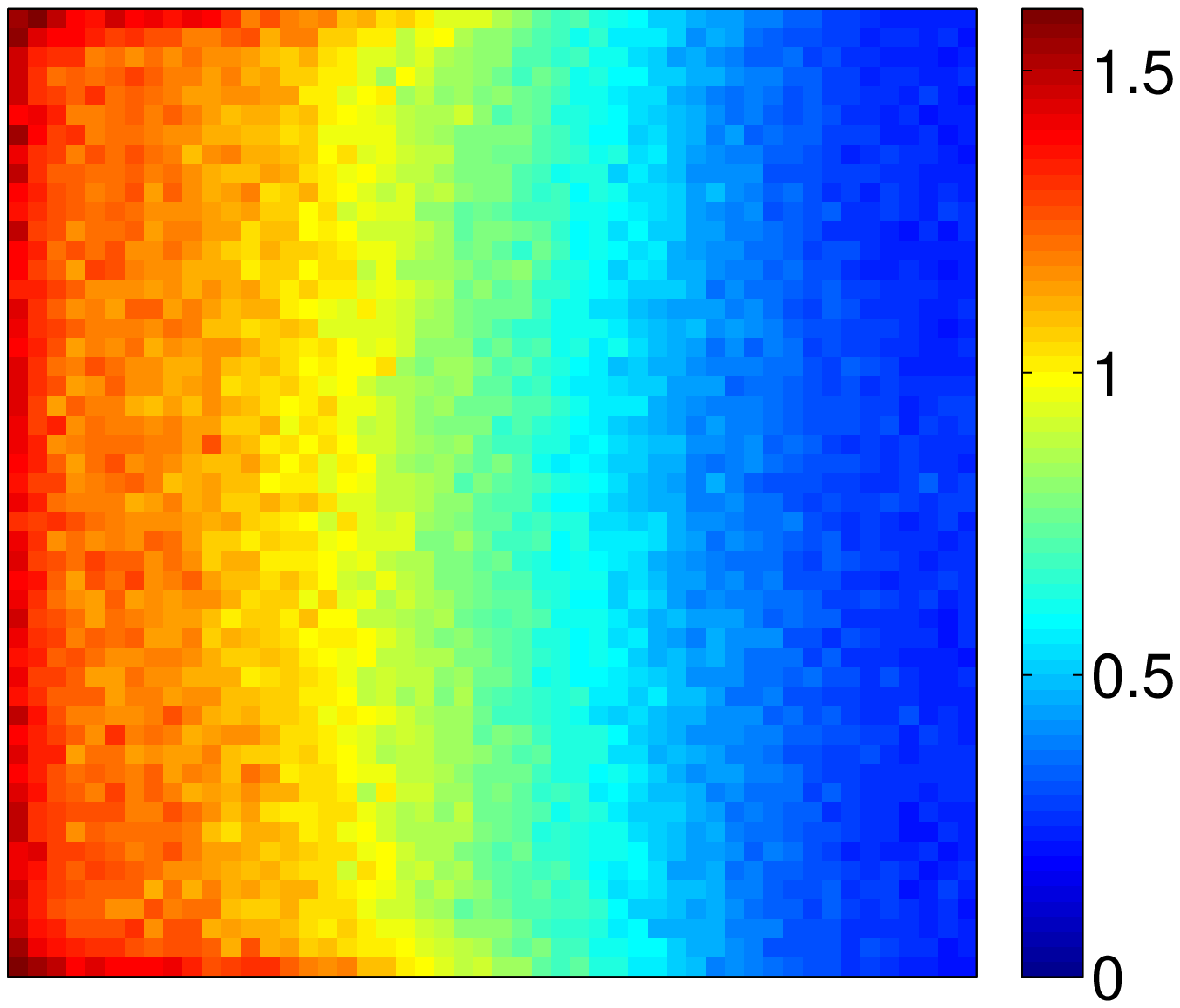}
	\put(5,10){\small$0$}
	\put(-3,41){\small$L_y/2$}
	\put(4,73){\small$L_y$}	
	\put(12,1){\small$0$}
	\put(40,1){\small$L_x/2$}
	\put(74,1){\small$L_x$}
\end{overpic}
\label{subfig:KMC:hardSphere}
}
\subfigure[]{
\begin{overpic}[width=0.31\textwidth]{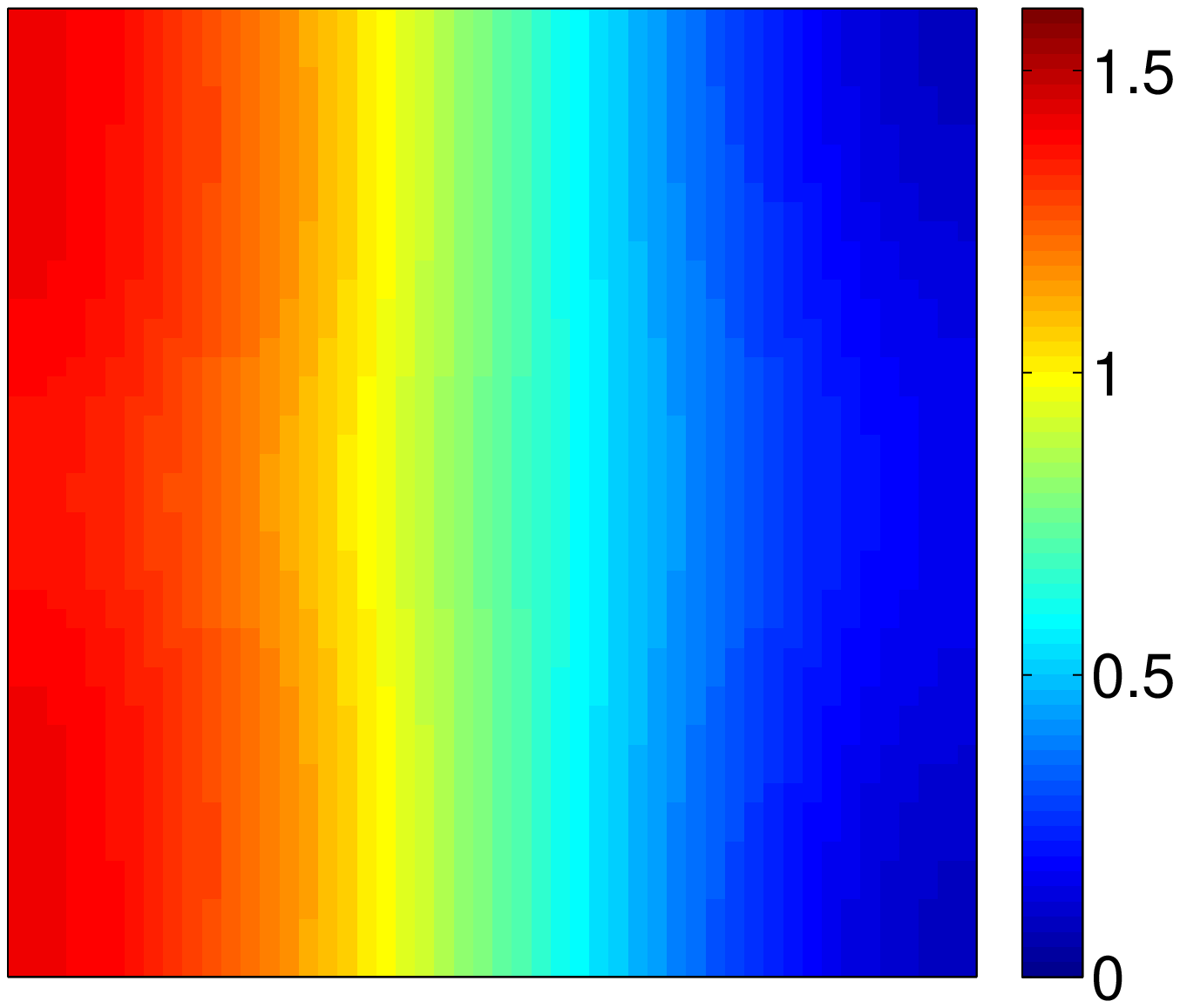}
	\put(5,10){\small$0$}
	\put(-3,41){\small$L_y/2$}
	\put(4,73){\small$L_y$}	
	\put(12,1){\small$0$}
	\put(40,1){\small$L_x/2$}
	\put(74,1){\small$L_x$}
\end{overpic}
\label{subfig:FVM}
}
\caption{{\it Comparison of individual-based simulations with 
$\eqref{eq:classicalvjp}$. Each plot shows the
resulting density at the final time of the simulation, $20\,\text{sec}$.}
\changed{(a) {\it Individual-based simulation using $16\times4\times10^4$ point particles.} 
(b) {\it Individual-based simulation, average over $4\times10^4$ runs using $16$ particles with 
            hard-sphere interactions.} }
(c) {\it Numerical solution to $\eqref{eq:classicalvjp}$ 
using a finite volume method with parameters given in the text.}}
\label{fig:collisions}
\end{figure}
In panels (a) and (b) of Figure~\ref{fig:collisions}, we compare two Monte 
Carlo simulations: (a) in which particles are allowed to pass through 
one another and (b) in which collisions are modelled explicitly. 
In Figure~\ref{fig:collisions}(c) we present 
the solution of equation \eqref{eq:classicalvjp}. This comparison 
demonstrates that the mean density of the underlying process converges
to the solution of transport equation \eqref{eq:classicalvjp}. The parameters 
employed in this model comparison are taken directly from the equivalent
robot experiment; 
$(s,\lambda,\eps) = (5.8 \times 10^{-2} \;\mbox{m}/\mbox{sec}, 
\; 0.250 \; \mbox{sec}^{-1}, 
\; 7.5 \times 10^{-2} \;\mbox{m})$. 
In Figure~\ref{fig:collisions}(c), for the differential equation, we use 
a first-order numerical scheme with $\Delta \theta = \pi/20 $, 
$\Delta x = L_y / 200 $ and $\Delta t = 10^{-2} \,\text{sec}$. 

In the Monte Carlo simulations we initialise particles in the effective 
pen for $20\,\text{sec}$ where they undergo hard-sphere collisions. They 
are then released into the larger arena where in one simulation they are 
point-particles and in the other they undergo reflective collisions as 
hard-spheres. 
Instead of removing particles at the target boundary as shown in 
Figure~\ref{fig:arenaSketch} (as we do in the robot experiments), this 
edge of the domain is closed so that all edges correspond to reflective
 boundary conditions. For transport equation 
 \eqref{eq:classicalvjp}, we model the initial condition as a step function 
 over the pen. These densities are visualised in 
 Figure~\ref{fig:collisions}. Formally, this initial condition 
 can be written as 
 \begin{equation}\label{Eq:comparisonIC}
	p(0, \vect{x}, \vect{v}) = \frac{\chi_{\Omega_0}
	\, \delta(\vectornorm{\vect{v}}-s)}{L_0^2\,2\pi s}\,,
\end{equation}
where $\chi_{\Omega_0}$ denotes the indicator function of the 
initial region $\Omega_0$. The corresponding boundary condition is
$p (t, \vect{x}, \vect{v}) = p(t, \vect{x},\vect{v'})$ for
$\vect{x}\in\bdref$ 
where the reflected velocity $\vect{v'}$ is defined as
\begin{equation} \label{eq:reflection}
\vect{v'} = \vect{v} - 2(\vect{v}\cdot\vect{n}_\Omega)\vect{n}_\Omega\,,
\end{equation}
where $\vect{n}_\Omega$ is the outward pointing normal at the position 
$\vect{x}\in\p\Omega$.

After $20\,\text{sec}$, we record the density in each of the scenarios and 
present the results in Figure \ref{fig:collisions}.
There is minimal visible discrepancy between the Monte Carlo simulations
presented in Figure~\ref{fig:collisions} for our choice of parameter values. 
In order to compare the three simulations given in 
Figure \ref{fig:collisions} we also employed a pairwise Kolmogorov-Smirnov 
test \citep{Peacock1983}. A value (of the Kolmogorov-Smirnov metric) 
close to zero denotes a good fit between the two 
simulations. It corresponds to the probability that one can reject
the hypothesis that the distributions are identical. When 
comparing the two Monte Carlo simulations, a 
value of $2.37 \times 10^{-2}$ was obtained; when comparing equation 
\eqref{eq:classicalvjp} with the hard-sphere Monte Carlo simulation, 
a value of $5.65 \times 10^{-2}$ was obtained; finally when comparing equation 
\eqref{eq:classicalvjp} with the point-particle Monte Carlo simulation, 
a value of $3.40 \times 10^{-2}$ was obtained. 
\changed{This supports the visual observation that all three density 
distributions are all highly similar.}

In the limit where $N\to\infty$, for $N$ being the number of robots, 
transport equation \eqref{eq:classicalvjp} can be altered 
by the addition of a Boltzmann-like collision term 
\citep{SH,Cercignani:1988:BEA}. 
\changed{It can be shown that the effects of collisions between robots 
are negligible for the presented study \citep{Franz_2014}.}

\subsection{Comparison between theory and experiments: loss of mass over time}
\label{subsec:comparison1}

In this and subsequent sections we compare the results of 50 repetitions of 
the experiments described
in Section~\ref{subsec:epuck} with numerical results obtained by 
solving the corresponding mathematical equations.
One way of interpreting the experimental exit-time data is by considering
the expected mass remaining inside the arena $\Omega$ at a given time. 
For the experimental data this quantity is plotted
as a solid (black) line in Figure~\ref{fig:comparison2}\subref{subfig:MassOverTime2}. 
We compare this 
result to the variation of the remaining mass with time from a numerical
solution of \eqref{eq:classicalvjp} combined with the following boundary 
conditions:
\begin{equation}\label{eq:classicalBC}
\begin{alignedat}{2}
p (t, \vect{x}, \vect{v}) & = 0\,, 
&\hspace{0.5cm}& \vect{x}\in\bdtar,\,\vect{v}\cdot\ntar < 0\,, \\ 
p (t, \vect{x}, \vect{v}) & = p(t, \vect{x},\vect{v'})\,,  
&&\vect{x}\in\bdref\,,
\end{alignedat}
\end{equation}
where the reflected velocity $\vect{v'}$ is defined by
(\ref{eq:reflection}). As demonstrated in Section~\ref{subsec:collisions},
such a comparison is reasonable since collisions do not have a
major impact in the parameter regime chosen
here. The initial condition for transport equation \eqref{eq:classicalvjp} 
is identical to the condition given in equation \eqref{Eq:comparisonIC}.
The mass remaining in the domain is then defined as
\begin{equation*}
m(t) = \int_\Omega\int_V p(t, \vect{x}, \vect{v})\,\dd\vect{x}\,\dd\vect{v}\,,
\end{equation*}
and is plotted as a dotted (red) line in 
Figure~\ref{fig:comparison2}\subref{subfig:MassOverTime2}.
The initial mass is normalized to 1.
An obvious observation from Figure~\ref{fig:comparison2}\subref{subfig:MassOverTime2} 
is that the transport equation description does not match the experimental data
well, with the robots exiting the arena significantly slower than 
predicted.
In this figure, we use a first-order finite volume method with 
$\Delta \theta = \pi/20$, 
$\Delta x = 1.183\,\text{m}/200$ and $\Delta t = 10^{-3} \,\text{sec}$ 
in order to solve transport equation \eqref{eq:classicalvjp}.

\begin{figure}
\centering
\subfigure[]{
\includegraphics[width=0.45\textwidth, height=0.35\textwidth]{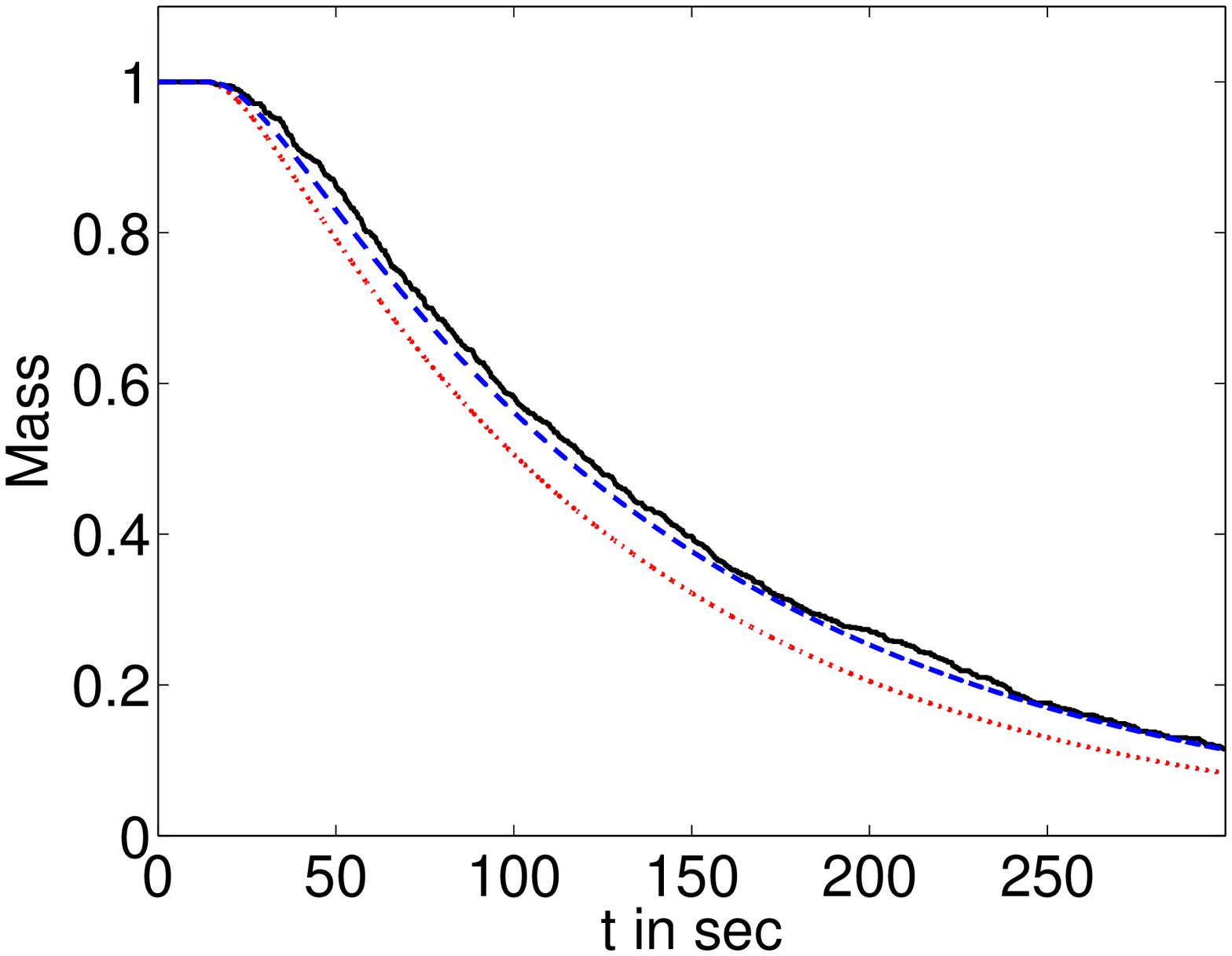}
\label{subfig:MassOverTime2}
}
\subfigure[]{
\includegraphics[width=0.45\textwidth, height=0.35\textwidth]{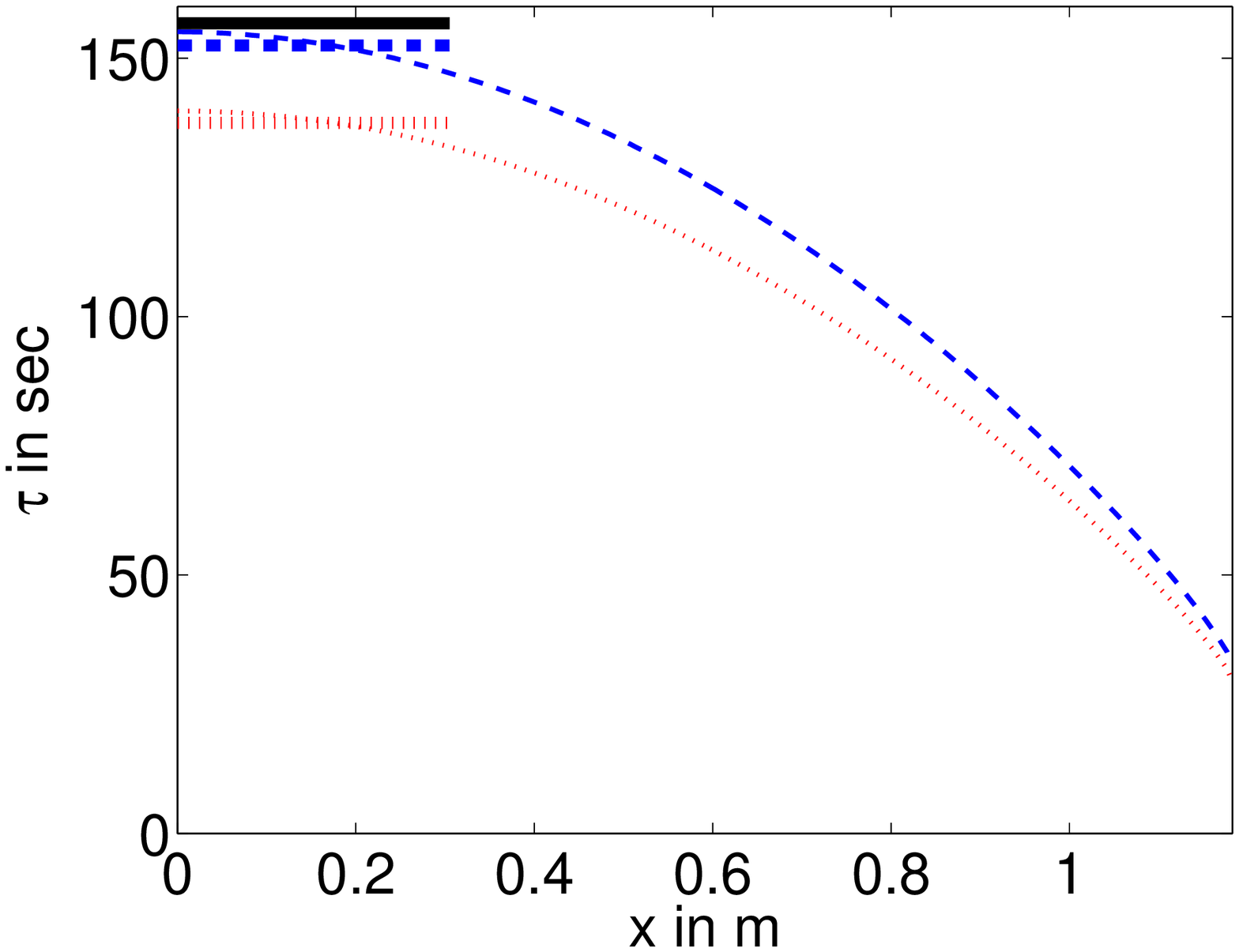}
\label{subfig:MeanExitTime2}
}
\caption{\changed{(Colour available online.) {\it Comparing the experimental 
results (solid black line) to solutions of $\eqref{eq:classicalvjp}$
and $\eqref{eq:restingstate:p}$--$\eqref{eq:restingstate:r}$. 
Panel} (a) {\it shows the relative mass of the system over time.
The dotted line (red) shows the numerical solution to equation $\eqref{eq:classicalvjp}$ 
with boundary conditions $\eqref{eq:classicalBC}$, the dashed line (blue) shows the 
numerical solution of the system of equations 
$\eqref{eq:restingstate:p}$--$\eqref{eq:restingstate:r}$ with boundary 
conditions $\eqref{eq:robotics:restingstate:BC}$.
Panel} (b) {\it shows the mean exit time averaged over all velocity 
directions vs the x-coordinate along the arena edge.
The adsorbing boundary is at x=1.183 m. The 
dotted line (red) shows the mean exit time computed using equation $\eqref{eq:meanExitTime}$ 
with boundary conditions $\eqref{eq:meanExitTimeBC}$,
the dashed line (blue) shows the mean exit time computed using equation 
$\eqref{RevExitTimeProb}$ with boundary conditions 
$\eqref{tauboundaryconditionssim}$. In order to allow direct comparison 
with the experimental data, the shorter bold lines represent the average 
of theoretically derived exit times over the region $\Omega_0$, 
from which the robots 
were released in the experimental scenario. For both plots parameters and numerical 
methods are described in the text.}}}
\label{fig:comparison2}
\end{figure}

\subsection{Comparison between theory and experiments: mean exit time problem}
\label{subsec:comparison1b}

An alternative way to interpret the experimental data is to consider mean 
exit times. \changed{Throughout the experiments only $708$ of the $800$ ($=50\times 16$)
robots left the arena before $t_\mathrm{end} = 300\unit{sec}$. The average
exit time of those $708$ robots was $121.92\unit{sec}$. In order to be able
to compare experimental exit times with the mean exit time problems, it is
necessary to estimate the mean exit time of all $800$ robots. Using
the best exponential fit on the mass over time relation 
(cf. Figure~\ref{fig:comparison2}\subref{subfig:MassOverTime2}),
we can estimate the mean exit time of the remaining $92$ robots to be $424.69\unit{sec}$.
The approximate mean exit time established in the experiments is therefore $156.74\unit{sec}$;
this value is plotted as the solid (black) line in 
Figure~\ref{fig:comparison2}\subref{subfig:MeanExitTime2}.}
In order to be able to compare this value to analytic results, one has 
to reformulate the transport equation 
\eqref{eq:classicalvjp} into a mean exit time problem. Let us therefore define the 
mean exit time $\tau = \tau(\vect{x}_0, \vect{v}_0)$ of a robot that starts at position
$\vect{x}_0\in\Omega$ with velocity $\vect{v}_0\in V$. This mean exit time satisfies the 
following equation
\begin{equation} \label{eq:meanExitTime}
\vect{v}_0\cdot\nabla_{\vect{x}_0} \tau (\vect{x}_0,\vect{v}_0)  
- \lambda \tau ( \vect{x}_0, \vect{v}_0) 
+ \lambda \int_V{T(\vect{u}_0,\vect{v}_0)\tau ( \vect{x}_0, \vect{u}_0)\, 
\dd \vect{u}_0} 
= - 1\,. 
\end{equation}
In Section \ref{sec:delay}, in which delays are modelled, a derivation 
is given for the mean exit time problem; setting the delay term to zero 
allows one to see how equation (\ref{eq:meanExitTime}) is derived.
This so-called ``backwards problem'' satisfies the following boundary conditions
\begin{equation}\label{eq:meanExitTimeBC}
\begin{alignedat}{2}
\tau(\vect{x}_0,\vect{v}_0) & = 0\,, 
&\hspace{0.5cm}&\vect{x}_0\in\bdtar \,, \vect{v}_0 \cdot \ntar > 0\,, \\ 
\tau(\vect{x}_0,\vect{v}_0) & = \tau(\vect{x}_0,\vect{v}_0')\,, 
&&\vect{x}_0\in\bdref\,,
\end{alignedat}
\end{equation}
where $\vect{v}_0'$ is again the reflected velocity with 
respect to $\vect{v}_0$ as defined in \eqref{eq:reflection}. 
Due to the arena shape, by taking the spatial average in the $y$-direction
\begin{equation}\label{classicalmepyav}
\tau_x (x_0,\vect{v}_0) 
= \frac{1}{L_y}\int_{-L_y /2}^{L_y/2} \tau (x_0,y_0,\vect{v}_0) \,\dd y_0, 
\end{equation}
one can further simplify the mean exit time problem. In the case where 
the turning kernel is given by equation \eqref{eq:turningKernel}, 
one can obtain a problem with two parameters $x_0$ and $\theta$, 
where $\theta \in (-\pi,\pi]$ is the angle defining the velocity $\vect{v}_0$
by $\vect{v}_0 = s (\cos(\theta), \sin(\theta)).$
For $\tau_x = \tau_x(x_0,\theta)$
\begin{equation} \label{eq:newFormMET}
\begin{alignedat}{2}
	s \cos(\theta) \frac{\partial \tau_x}{\partial x_0} 
	- \lambda\tau_x + 
	\frac{\lambda}{2\pi}\int_{-\pi}^{\pi} \tau_x(x_0, \phi)\,\dd\phi & = 
	- 1\,, \\
	\tau_x(0, \theta) & = \tau_x(0, \pi-\theta)\,,\\
	\tau_x(L_x, \theta) & = 0\,,&\;\;\theta \in
	\left[-\frac{\pi}{2}, \frac{\pi}{2} \right]. 
\end{alignedat}
\end{equation}
When initial direction cannot be specified, the mean-exit time 
from a given $x$-position is given by 
\begin{equation*}
\frac{1}{2 \pi}\int_{-\pi}^\pi \tau_x(x_0, \theta)\,\dd\theta ,
\end{equation*}
where $\tau_x$ is the solution of \eqref{eq:newFormMET}.
This is plotted as the dotted (red) line in 
Figure~\ref{fig:comparison2}\subref{subfig:MeanExitTime2}. 
The numerical solution was performed using an upwind-scheme
in the $x$-direction with $\Delta x = 1.1825\,\text{m}/200$ and 
an angular discretisation of  
$\Delta \theta = \pi/20$. Additionally, we take the spatial average 
of the mean-exit time from the initial region $\Omega_0$ and 
plot this as the bold dashed line in
Figure~\ref{fig:comparison2}\subref{subfig:MeanExitTime2}. 
This line does not match well with the corresponding average 
mean-exit time found in the robot experiments. \changed{The numerical solution 
of equation \eqref{eq:newFormMET} 
predicted a mean exit time of $137.49\unit{sec}$, meaning an underestimation 
of $19.25\,\text{sec}$ or $12.3\%$ compared to the experimental exit time
of $156.74\unit{sec}$.} In the following section we will extend the classical 
velocity jump theory to improve this match with the experimental data.

\section{Modelling turning delays}
\label{sec:delay}
In Section \ref{subsec:collisions}, we observed that collisions between 
robots does not play a major role in explaining the discrepancy between 
the transport equation \eqref{eq:classicalvjp} and the experimental 
data presented in 
Sections~\ref{subsec:comparison1} and \ref{subsec:comparison1b}. 
As well as assuming independently 
moving particles, the transport equation \eqref{eq:classicalvjp} is 
also predicated on the assumption that the reorientation phase takes 
a negligible amount of time compared to the running phase. Since this 
assumption is not satisfied in our robot  experiments, this section extends the original model through the inclusion of finite turning times.

\subsection{Introduction of a resting state}
\label{subsec:restingState}
Let us initially state two assumptions that apply to the robot 
experiment, but might not extend to velocity jump processes
in biological systems, like the run-and-tumble motion of \emph{E. Coli} 
\citep{Berg:1983:RWB}, which has motivated the searching strategies 
implemented on robots: 

\smallskip
{\leftskip 12mm

\parindent -5mm
\textbf{(a)} a new direction $\vect{v}'\in V$ 
is chosen as soon as the particle enters the reorientation (``tumble") phase; 

\smallskip

\parindent -5mm
\textbf{(b)} the time it takes for a particle
to reorient (``tumble") from velocity $\vect{v}\in V$ to $\vect{v}'\in V$ is 
specified by the function 
$K(\vect{v}', \vect{v}):V\times V\mapsto \mathbb{R}^+$.

\smallskip

\par}

\noindent
Assumption \textbf{(b)} implies that the turning time is 
constant in time and equal for each particle and, in particular, 
does not depend on the particle's history.
For the robots studied in this paper, we can additionally assume 
that reorientation phase is equivalent to a directed
rotation with a constant angular velocity $\omega\in\mathbb{R}^{+}$. 
Therefore, the turning time depends only on
the angle between the current velocity $\vect{v}\in V$ and the new 
velocity $\vect{v}'\in V$ and $K$ takes the form
\begin{equation}\label{eq:DelayKernel}
	K(\vect{v}',\vect{v})
	= \frac{1}{\omega}\arccos 
	\left(\frac{\vect{v}\cdot \vect{v}'}{\vectornorm{\vect{v}}
	\; \vectornorm{\vect{v}'}}\right).
\end{equation}
We now extend the classical model \eqref{eq:classicalvjp} through 
the introduction of a \emph{resting state}
$r(t, \vect{x}, \vect{v}, \eta)$ that formally defines the number 
of particles currently ``tumbling" (turning) towards 
their new chosen velocity $\vect{v}$ and remaining turning time 
$\eta$. The density $p(t, \vect{x}, \vect{v})$ will
now only denote the particles which are at time $t$ in the run phase. The 
update of the extended system is given through
\begin{alignat}{1}\label{eq:restingstate:p}
\frac{\partial p}{\partial t}(t, \vect{x}, \vect{v}) 
+ \vect{v}\cdot \nabla_{\vect{x}} p(t, \vect{x}, \vect{v})
& = 
-\lambda p(t, \vect{x}, \vect{v}) 
+ r(t, \vect{x}, \vect{v}, 0^+)\,,\\
\label{eq:restingstate:r}
\frac{\partial r}{\partial t}(t, \vect{x}, \vect{v}, \eta) 
- \frac{\partial r}{\partial \eta}(t, \vect{x}, \vect{v}, \eta)  
& = 
\lambda \int_V p(t, \vect{x}, \vect{u})\, T(\vect{v}, \vect{u})\, 
\delta(\eta - K(\vect{v}, \vect{u}))\, \D\vect{u}\,.
\end{alignat}
In \eqref{eq:restingstate:p} we can see that running particles 
will enter a tumble phase with rate $\lambda$ and
particles that have finished the tumble signified through 
$\eta = 0$ will re-enter the run-phase. 
Equation \eqref{eq:restingstate:r} represents the linear 
relation between $t$ and $\eta$ and 
shows that particles enter the tumble phase depending on their
newly chosen velocity direction. In order to guarantee 
conservation of mass throughout the system, we introduce the
non-negativity condition for $\eta$ through
\begin{equation*}
	r(t, \vect{x}, \vect{v}, \eta) = 0, 
	\qquad
	\mbox{for}
	\qquad 	t > 0, \quad \vect{x} \in \Omega, \quad 
	\vect{v}\in V \quad \mbox{and} \quad \eta < 0.
\end{equation*}
Additionally, the boundary conditions for the 
system \eqref{eq:restingstate:p}--\eqref{eq:restingstate:r}
are given through
\begin{equation}\label{eq:robotics:restingstate:BC}
\begin{alignedat}{2}
p(t, \vect{x}, \vect{v}) & 
= 0\,, &\hspace{0.5cm}&\vect{x}\in\bdtar\,,\;\vect{v}\cdot\ntar < 0\,,
\;\\	
p(t, \vect{x}, \vect{v}) & = 
-r(t, \vect{x}, \vect{v}, 0^+)/(\vect{v}\cdot\nref)\,, &&
\vect{x}\in\bdref\,,\;\vect{v}\cdot\nref < 0\,,\\
\frac{\partial r}{\partial t}(t, \vect{x}, \vect{v}, \eta) 
- \frac{\partial r}{\partial \eta}(t, \vect{x}, \vect{v}, \eta)
& =  \delta(\eta - K(\vect{v}, \vect{v}'))\,(\vect{v'}\cdot\nref)\,	
p(t, \vect{x}, \vect{v}')\,,&& \vect{x}\in\bdref\,,\;\vect{v}\cdot\nref 
< 0\,,\\
r(t, \vect{x}, \vect{v}, \eta) & = 0\,, 
&& \vect{x}\in\bdref\,,\;\vect{v}\cdot\nref > 0\,,
\end{alignedat}
\end{equation}
where $\vect{v'}$ is the reflected velocity of $\vect{v}$ given by
(\ref{eq:reflection}). In order to show that the system 
\eqref{eq:restingstate:p}--\eqref{eq:restingstate:r} is actually 
consistent, we prove that mass in the system is conserved if 
no target is present.
\begin{lemma}
{\it The total mass in system 
$\eqref{eq:restingstate:p}$--$\eqref{eq:restingstate:r}$ with the boundary
conditions given in $\eqref{eq:robotics:restingstate:BC}$ in the case 
of reflective boundaries everywhere 
($\bdref = \partial\Omega\,,\bdtar = \emptyset$) given through
\begin{equation*}
M(t) = \int_\Omega\int_V p(t, \vect{x}, \vect{v})\,\D\vect{v}\,\D\vect{x}
+ \int_{\overline{\Omega}}\int_V\int_0^\infty r(t, \vect{x}, \vect{v}, \eta)\,
\D\eta\,\D\vect{v}\,\D\vect{x}\,,
	\end{equation*}
	is conserved.
	}
\end{lemma}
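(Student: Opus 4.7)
The strategy is to differentiate $M(t)$ in time, substitute the evolution equations~\eqref{eq:restingstate:p}--\eqref{eq:restingstate:r} together with the boundary conditions~\eqref{eq:robotics:restingstate:BC}, and verify that all resulting bulk and boundary contributions cancel. The bulk terms pair up thanks to the normalisation $\int_V T(\vect{v},\vect{u})\,\D\vect{v}=1$, while the boundary terms cancel because the two wall conditions for $p$ and $r$ together encode a reflection symmetry.

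For the bulk part of $\frac{d}{dt}\int_\Omega\int_V p\,\D\vect{v}\,\D\vect{x}$, substituting \eqref{eq:restingstate:p} and applying the divergence theorem to the advection term yields $-\int_{\partial\Omega}\int_V (\vect{v}\cdot\nref)\,p\,\D\vect{v}\,\D S-\lambda\int_\Omega\int_V p\,\D\vect{v}\,\D\vect{x}+\int_\Omega\int_V r(t,\vect{x},\vect{v},0^+)\,\D\vect{v}\,\D\vect{x}$. For the $r$-contribution, I would substitute \eqref{eq:restingstate:r} and integrate $-\partial_\eta r$ over $\eta\in(0,\infty)$ to obtain $-\int_\Omega\int_V r(t,\vect{x},\vect{v},0^+)\,\D\vect{v}\,\D\vect{x}$ under the natural decay $r(\cdot,\infty)=0$; the delta-function gain term becomes $\lambda\int_\Omega\int_V\int_V T(\vect{v},\vect{u})\,p(t,\vect{x},\vect{u})\,\D\vect{u}\,\D\vect{v}\,\D\vect{x}=\lambda\int_\Omega\int_V p\,\D\vect{u}\,\D\vect{x}$ by the normalisation of $T$. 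The two $\lambda$-terms and the two $r(0^+)$-terms cancel in pairs, so one is left with the boundary flux $-\int_{\partial\Omega}\int_V(\vect{v}\cdot\nref)\,p\,\D\vect{v}\,\D S$ together with the contribution from the wall portion of $\int_{\overline{\Omega}}\int_V\int_0^\infty r$.

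The main obstacle is showing that this residual boundary contribution vanishes, which requires the precise coupling between the two wall conditions in \eqref{eq:robotics:restingstate:BC}. I would split the velocity integral at the sign of $\vect{v}\cdot\nref$. On the incoming half-space $\{\vect{v}\cdot\nref<0\}$ the $p$-boundary condition rewrites $(\vect{v}\cdot\nref)\,p$ as $-r(t,\vect{x},\vect{v},0^+)$, and integrating the wall equation for $r$ in \eqref{eq:robotics:restingstate:BC} over $\eta\in(0,\infty)$ expresses this $r(0^+)$ term as $(\vect{v}'\cdot\nref)\,p(t,\vect{x},\vect{v}')$ up to the time derivative of the wall $r$-mass, which is precisely the quantity absorbed by extending the spatial integral from $\Omega$ to $\overline{\Omega}$. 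Finally, the reflection $\vect{v}\mapsto\vect{v}'$ defined in \eqref{eq:reflection} is an involutive orthogonal map of unit Jacobian that sends $\{\vect{v}\cdot\nref<0\}$ bijectively onto $\{\vect{v}\cdot\nref>0\}$ with $\vect{v}'\cdot\nref=-\vect{v}\cdot\nref$; this change of variables identifies the incoming contribution with the outgoing flux $\int_{\vect{v}\cdot\nref>0}(\vect{v}\cdot\nref)\,p\,\D\vect{v}$, so the two halves of the boundary integral cancel and $\frac{dM}{dt}\equiv 0$, as required.
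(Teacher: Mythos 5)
Your proposal is correct and follows essentially the same route as the paper's proof: bulk cancellation via the divergence theorem and the normalisation of $T$, followed by splitting the boundary flux into incoming and outgoing velocity half-spaces, integrating the wall condition for $r$ over $\eta$, and using the involutive, measure-preserving reflection $\vect{v}\mapsto\vect{v}'$ to cancel the two halves, with the wall $r$-mass (the $\overline{\Omega}\setminus\Omega$ portion of $M$) absorbing the residual time derivative. No substantive differences to report.
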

\begin{proof}
We define for every point $\vect{x}\in\bdref$ the two subsets 
$V^+$ and $V^-$ of $V$ as follows
\begin{equation}
V^+(\vect{x}) = \left\{ \vect{v}\in V\ : \ \vect{v}\cdot\nref > 0
\right\} \,,\qquad
V^-(\vect{x}) = \left\{ \vect{v}\in V\ : \ \vect{v}\cdot\nref < 0
\right\}\,.
\label{velocitypartition}
\end{equation}
Additionally, let us define
\begin{equation*}
R(t, \vect{x}, \vect{v}) 
= \int_0^\infty r(t, \vect{x}, \vect{v}, \eta)\,\D\eta\,.
\end{equation*}
Integrating \eqref{eq:restingstate:r} with respect to $\eta\in [0, \infty)$, 
we obtain after reordering for $\vect{x}\notin\partial\Omega$:
\begin{equation*}
\frac{\partial R}{\partial t}(t, \vect{x}, \vect{v}) 
= - r(t, \vect{x}, \vect{v}, 0^+) 
+ \lambda\int_Vp(t, \vect{x}, \vect{u})T(\vect{v}, \vect{u})\,\D\vect{u}\,.
\end{equation*}
Hence, for every point $\vect{x}\notin\partial\Omega$ we obtain
\begin{equation*}
\frac{\partial}{\partial t}\left[p(t, \vect{x}, \vect{v}) 
+ R(t, \vect{x}, \vect{v})\right]
= -\lambda p(t, \vect{x}, \vect{v}) 
+ \lambda \int_V p(t, \vect{x}, \vect{u}) T(\vect{v}, \vect{u})\,\D\vect{u}
- \vect{v}\cdot \nabla_{\vect{x}} p(t, \vect{x}, \vect{v})\,.
\end{equation*}
Integrating this with respect to $\vect{x}\in\Omega$ and $\vect{v}\in V$
gives
\begin{equation}\label{eq:res1}
\int_\Omega \int_V \frac{\partial}{\partial t}\left[p(t, \vect{x}, \vect{v})
+ R(t, \vect{x}, \vect{v})\right]\,\D\vect{v}\,\D\vect{x}
= - \int_\Omega\int_V \vect{v}\cdot \nabla_{\vect{x}} 
p(t, \vect{x}, \vect{v})\,\D\vect{v}\,\D\vect{x} \,.
\end{equation}
Using the divergence theorem, we can evaluate the integral on the right
hand side to be
\begin{eqnarray*}
\int_\Omega\int_V \vect{v}\cdot\nabla_x p\,\D\vect{v}\,\D\vect{x} 
& = &
\int_{\partial\Omega}\int_V (\vect{v}\cdot\nref(\vect{x}))\, 
p(t, \vect{x}, \vect{v})\,\D\vect{v}\,\D\vect{x}\\
& = & \int_{\partial\Omega}\int_{V^+(\vect{x})} (\vect{v}\cdot\nref)\, 
p(t, \vect{x}, \vect{v})\,\D\vect{v}\,\D\vect{x}
+ \int_{\partial\Omega}\int_{V^-(\vect{x})} (\vect{v}\cdot\nref)\, 
p(t, \vect{x}, \vect{v})\,\D\vect{v}\,\D\vect{x}\\
& = & \int_{\partial\Omega}\int_{V^+(\vect{x})} (\vect{v}\cdot\nref)\, 
p(t, \vect{x}, \vect{v})\,\D\vect{v}\,\D\vect{x}
- \int_{\partial\Omega}\int_{V^-(\vect{x})} 
r(t, \vect{x}, \vect{v}, 0^+) \,\D\vect{v}\,\D\vect{x}\,,
\end{eqnarray*}
where we have used the second boundary condition 
in \eqref{eq:robotics:restingstate:BC} in the last step. 
Additionally, for $\vect{x}\in\partial\Omega$ and 
$\vect{v}\in V^-(\vect{x})$, we obtain by integrating 
the third boundary condition in \eqref{eq:robotics:restingstate:BC} 
with respect to $\eta\in[0, \infty)$
\begin{equation*}
\frac{\partial R}{\partial t}(t, \vect{x}, \vect{v}) 
= - r(t, \vect{x}, \vect{v}, 0^+) + (\vect{v'}\cdot\nref)
\,p(t, \vect{x}, \vect{v'})\, \changed{.}
\end{equation*}
Integrating this with respect to $\vect{x}\in\partial\Omega$ and 
$\vect{v}\in V$ and using the last boundary condition
in \eqref{eq:robotics:restingstate:BC} we obtain
\begin{eqnarray}
\int_{\partial\Omega} \int_V \frac{\partial R}{\partial t} 
\,\D\vect{v}\,\D\vect{x}
& = & \int_{\partial\Omega} \int_{V^-(\vect{x})} 
\frac{\partial R}{\partial t} \,\D\vect{v}\,\D\vect{x}\nonumber\\
& = & -\int_{\partial\Omega}\int_{V^-(\vect{x})} 
r(t, \vect{x}, \vect{v}, 0^+)\,\D\vect{v}\,\D\vect{x}
+ \int_{\partial\Omega}\int_{V^-(\vect{x})} (\vect{v'}\cdot\nref)
\,p(t, \vect{x}, \vect{v'})\,\D\vect{v}\,\D\vect{x}\nonumber\\
& = & -\int_{\partial\Omega}\int_{V^-(\vect{x})} 
r(t, \vect{x}, \vect{v}, 0^+)\,\D\vect{v}\,\D\vect{x}
+ \int_{\partial\Omega}\int_{V^+(\vect{x})} (\vect{v}\cdot\nref)
\,p(t, \vect{x}, \vect{v})\,\D\vect{v}\,\D\vect{x}. \label{eq:res2}
\end{eqnarray}
Summing up the results from \eqref{eq:res1} and \eqref{eq:res2}, we obtain
$\mbox{d}M/\mbox{d}t = 0$ and hence the total mass $M(t)$ in the system 
is conserved.
\end{proof}

\subsection{Transport equation with turning delays}
\label{subsec:turning:transportEquation}
We eliminate the resting state from system 
\eqref{eq:restingstate:p}--\eqref{eq:restingstate:r}
and derive the generalization of the transport equation 
\eqref{eq:classicalvjp} to a transport equation with 
a suitably incorporated delay. This can be done by solving 
\eqref{eq:restingstate:r} for $r$ using the method 
of characteristics, which results in
\begin{equation}
r(t, \vect{x}, \vect{v}, 0) 
=
r(0, \vect{x}, \vect{v}, t) 
+ 
\lambda \int_V 
T(\vect{v}, \vect{u})
\, p(t - K(\vect{v}, \vect{u}),\vect{x}, \vect{u})
\, H(t - K(\vect{v}, \vect{u}))
\dd\vect{u}\,,
\label{expressionT}
\end{equation}
where $H$ is the Heaviside step function. Let us assume
that $K(\vect{v}, \vect{u})$ is given by (\ref{eq:DelayKernel}).
Then $K(\vect{v}, \vect{u}) \le \pi/\omega$. Considering times
$t > \pi/\omega$, we have $r(0, \vect{x}, \vect{v}, t) = 0$. 
We can now substitute (\ref{expressionT})
into \eqref{eq:restingstate:p} to obtain
\begin{equation}
\label{eq:delayDE}
\frac{\p p}{\p t} \, (t, \vect{x},\vect{v}) + 
\vect{v}\cdot\nabla_{\vect{x}} \, p(t, \vect{x},\vect{v}) = 
-\lambda p(t, \vect{x}, \vect{v}) + 
\lambda \int_V
T(\vect{v},\vect{u})
\,
p(t - K(\vect{v},\vect{u}), \vect{x}, \vect{u})
\, \dd \vect{u},
\end{equation}
for $t > \pi/\omega$. 
Note that \eqref{eq:delayDE} only considers particles in the running 
phase and hence does not strictly conserve mass. 
The boundary conditions for transport equation \eqref{eq:delayDE} are
\begin{equation}\label{eq:delayBC}
\begin{alignedat}{2}
p (t, \vect{x}, \vect{v}) & = 0\,, 
&\hspace{0.5cm} & \vect{x}\in\bdtar\,,\vect{v}\cdot\ntar < 0\,, \\ 
p (t, \vect{x}, \vect{v}) & = p(t - K(\vect{v},\vect{v'}), \vect{x},\vect{v'})\,,  
&& \vect{x}\in\bdref\,,\vect{v} \cdot \nref < 0\,.
\end{alignedat}
\end{equation}

\subsection{Equation for mean-exit time}
\label{subsec:turning:mfpt}

Equation (\ref{eq:delayDE}) can be rewritten as $\mathcal{M}p = 0,$
where the operator $\mathcal{M}$ is given by
\begin{equation}
\mathcal{M}p =
- \frac{\p p}{\p t} 
- \vect{v}\cdot\nabla_{\vect{x}} p
- \lambda p 
+ \lambda \int_V
T(\vect{v},\vect{u})
p(t - K(\vect{v},\vect{u}), \vect{x}, \vect{u})\dd \vect{u}.
\label{operatorM}
\end{equation}
For a forward problem specified by $\mathcal{M}p = 0,$
coupled with initial and boundary conditions, the backward problem 
is given by the adjoint operator $\mathcal{M}^*q = 0,$
with final condition and adjoint boundary conditions \citep{JL}. 
The adjoint operator is given by:
\begin{equation*}
\langle \mathcal{M}p,q \rangle
= \langle p,\mathcal{M}^*q \rangle 
\quad\text{where}
\quad \langle p,q \rangle 
= \int_{-\infty}^{\infty}\int_{\Omega}
\int_V p(t, \vect{x}, \vect{v}) q(t, \vect{x}, \vect{v}) \,
\dd \vect{v} \, \dd \vect{x} \, \dd t.
\end{equation*}
Using integration by parts and the divergence theorem,
we see
\begin{eqnarray}
\langle \mathcal{M}p,q \rangle
&=& 
\int_{-\infty}^{\infty} \int_\Omega \int_V 
\left(
- \frac{\p p}{\p t} 
- \vect{v}\cdot\nabla_{\vect{x}} p
- \lambda p 
+ \lambda \int_V
T(\vect{v},\vect{u})
p(t - K(\vect{v},\vect{u}), \vect{x}, \vect{u})\dd \vect{u}
\right)
q \; \dd \vect{v} \, \dd \vect{x} \, \dd t \nonumber \\ 
&=& 
\int_{-\infty}^{\infty} \int_\Omega \int_V 
p \left(
\frac{\p q}{\p t} 
+ \vect{v}\cdot\nabla_{\vect{x}} q
- \lambda q
+ \lambda \int_V
T(\vect{u},\vect{v})
q(t + K(\vect{u},\vect{v}), \vect{x}, \vect{u}) \dd \vect{u}
\right)
\, \dd \vect{v} \, \dd \vect{x} \, \dd t \nonumber \\
&&+
\int_{-\infty}^{\infty} 
\int_V
\int_{\p\Omega} 
p(t,\vect{x},\vect{v}) q(t,\vect{x},\vect{v}) 
[\vect{v} \cdot \vect{n} ]\, \dd S_x \, \dd\vect{v} \, \dd t
\label{pomeqMstar}
\end{eqnarray}
where we used the boundary conditions
\begin{equation}
\lim\limits_{t \to\infty} p(t,\vect{x},\vect{v}) 
=
\lim\limits_{t \to -\infty} q(t,\vect{x},\vect{v}) \quad 
=
0.
\end{equation}
We will also assume the following boundary conditions
\begin{equation}
\label{backwardbcs}
\begin{alignedat}{2}
 q (t, \vect{x},\vect{v}) & = 
 0\,,  &\hspace{0.5cm}&\vect{x}\in \bdtar\,,\vect{v} \cdot \vect{n}_{\mathcal{T}} > 0 \,, \\ 
q(t , \vect{x},\vect{v}) & = q (t + K(\vect{v}',\vect{v}), \vect{x},\vect{v}') \,, 
&& \vect{x}\in \bdref\,, \vect{v} \cdot \vect{n}_{\mathcal{R}} > 0\,.
\end{alignedat}
\end{equation}
Then the last term in (\ref{pomeqMstar}) is equal to zero as it is
shown in \ref{BCDERIVATION}. Using 
(\ref{pomeqMstar})--(\ref{backwardbcs}) and the variable set 
$(t_0, \vect{x}_0,\vect{v}_0)$ to indicate starting times 
and positions, we can write the backwards equation $\mathcal{M}^*q = 0$
in the following form:
\begin{equation}
- \frac{\p q}{\p t_0}
(t_0, \vect{x}_0,\vect{v}_0) 
-
\vect{v}_0\cdot\nabla_{\vect{x}_0} 
q (t_0, \vect{x}_0,\vect{v}_0)
=
-
\lambda q(t_0, \vect{x}_0, \vect{v}_0) + 
\lambda \int_V{T(\vect{u}_0,\vect{v}_0)q(t_0 + 
K(\vect{u}_0,\vect{v}_0), \vect{x}_0, \vect{u}_0)\dd \vect{u}_0}.
\label{pomMstareq2}
\end{equation}
More precisely, we should write
$q (t_0, \vect{x}_0,\vect{v}_0)
\equiv p (t, \vect{x},\vect{v} \, | \, t_0, \vect{x}_0,\vect{v}_0)$,
i.e. $q$ gives the probability that the particle is at the
position $\vect{x}$ with velocity $\vect{v}$ at time $t$ given
that its initial position and velocity at time $t_0$ were
$\vect{x}_0$ and $\vect{v}_0$, respectively.
Let $\rho \equiv \rho (t,\vect{x}_0, \vect{v}_0)$ 
be the probability that the particle is in $\Omega$ at time 
$t$ given that the initial position and velocity is given as 
$\vect{x}_0$ and $\vect{v}_0$, respectively. Then
$$
\rho (t,\vect{x}_0, \vect{v}_0)
=
\int_\Omega \int_V
p (t, \vect{x},\vect{v} \, | \, 0, \vect{x}_0,\vect{v}_0)
\,\D\vect{v}\,\D\vect{x} =
\int_\Omega \int_V
p (0, \vect{x},\vect{v} \, | \, -t, \vect{x}_0,\vect{v}_0)
\,\D\vect{v}\,\D\vect{x}. 
$$
Substituting $t_0 = -t$ into (\ref{pomMstareq2}) and using
the Taylor expansion, we obtain 
\begin{eqnarray}
\frac{\p \rho}{\p t} (t, \vect{x}_0,\vect{v}_0) 
- \vect{v}_0\cdot\nabla_{\vect{x}_0} \rho (t, \vect{x}_0,\vect{v}_0) 
&=& -\lambda \rho (t, \vect{x}_0, \vect{v}_0) 
+ \lambda \int_V
T(\vect{u}_0,\vect{v}_0)\rho (t - K(\vect{u}_0,\vect{v}_0), 
\vect{x}_0, \vect{u}_0) \, \dd \vect{u}_0 \nonumber \\
&=& -\lambda \rho (t, \vect{x}_0, \vect{v}_0) 
+ \lambda \int_V T(\vect{u}_0,\vect{v}_0)
\, \rho (t, \vect{x}_0, \vect{u}_0)
\, \dd \vect{u}_0
\nonumber \\ 
&&- 
\lambda \int_V
T(\vect{u}_0,\vect{v}_0)
K(\vect{u}_0,\vect{v}_0)
\frac{\p \rho}{\p t} (t, \vect{x}_0, \vect{u}_0)
\, \dd \vect{u}_0 + \dots . 
\label{pomtaylexpM}
\end{eqnarray}
The probability of a single particle leaving $\Omega$ in time interval 
$[t,t+\dd t)$ is $\rho(t,\vect{x}_0,\vect{v}_0) 
- \rho(t + dt,\vect{x}_0,\vect{v}_0) 
\approx -\p \rho/\p t (t,\vect{x}_0,\vect{v}_0)\, \dd t$.
Consequently, the expected exit time is given by 
\begin{equation*}
\tau (\vect{x}_0, \vect{v}_0)
= -\int_{0}^\infty
t\frac{\p \rho}{\p t} (t,\vect{x}_0,\vect{v}_0) \, \dd t 
= \int_{0}^\infty \rho(t,\vect{x}_0,\vect{v}_0) \, \dd t ,
\end{equation*} 
where we use the fact that $\rho(t, \vect{x}_0, \vect{v}_0)\to 0$ as $t\to\infty$.
Integrating (\ref{pomtaylexpM}) over time, we obtain
\begin{equation} \label{dimensionalTauProblem}
\begin{alignedat}{1}
&\vect{v}_0\cdot\nabla_{\vect{x}_0} \tau (\vect{x}_0,\vect{v}_0)  
- \lambda \tau ( \vect{x}_0, \vect{v}_0) + \lambda \int_V{T(\vect{u}_0,\vect{v}_0)\tau ( \vect{x}_0, \vect{u}_0) 
\dd \vect{u}_0} \\ &\hspace{3cm}= 
-
\left( 1 +  \lambda \int_V T(\vect{u}_0,\vect{v}_0) K(\vect{u}_0,\vect{v}_0) 
\dd \vect{u}_0\right)\,,
\end{alignedat}
\end{equation}
where we neglected the higher order terms. By Taylor-expanding the 
boundary terms from equation \eqref{eq:delayBC} and integrating in 
time, we obtain the following boundary conditions
\begin{equation}\label{tauboundaryconditions}
\begin{alignedat}{2}
\tau(\vect{x}_0,\vect{v}_0) &= 0\,,  &\hspace{0.5cm}& 
\vect{x}_0\in\bdtar\,, \;\vect{v}_0 \cdot \ntar > 0\,, \\ 
\tau(\vect{x}_0,\vect{v}_0') &= \tau(\vect{x}_0,\vect{v}_0) + 
K(\vect{v}_0',\vect{v}_0)\,, && 
\vect{x}_0\in\bdref\,, \;\vect{v}_0 \cdot \nref > 0\,,
\end{alignedat}
\end{equation}
where the reflected velocity $\vect{v}_0'$ is given by (\ref{eq:reflection}),
i.e. $\vect{v}_0' = \vect{v}_0 - 2(\vect{v}_0\cdot \nref)\nref$. 

\subsection{Comparison between the transport
equation theory with delays and experimental results}
\label{subsec:comparison2}

Let us now compare the extended theory developed in 
Sections~\ref{subsec:restingState}--\ref{subsec:turning:mfpt}
to the experimental data using the same approach as 
in Sections~\ref{subsec:comparison1} and \ref{subsec:comparison1b}.
For the case of the arena given in Figure~\ref{fig:arenaSketch}, 
we write $\Omega = (0,L_x) \times (-L_y/2,L_y/2)$ 
and $\mathcal{T} = (L_x,\infty)\times (-L_y/2,L_y/2)$ 
and we simplify equation (\ref{dimensionalTauProblem}) by 
integrating over the $y$-direction to obtain an average value 
for $\tau$ for our position along the $x$-axis. Let us define 
this average:
 \begin{equation}\label{DelayYAv}
\tau_x(x_0,\theta) = \frac{1}{L_y}\int_{-L_y/2}^{L_y/2} \tau (x_0,y_0,\vect{v}_0)\dd y_0.
\end{equation}
By writing $\vect{v}_0 = (v_0^{(x)},v_0^{(y)})$, integrating
(\ref{dimensionalTauProblem}) and using (\ref{tauboundaryconditions}),
we obtain the following equation for $\tau_x$
\begin{equation}\label{unEval}
\begin{alignedat}{1}
&v_0^{(x)}\frac{\p \tau_x}{\p x_0} 
- \frac{|v_0^{(y)}| \, K(\vect{v}_0',\vect{v}_0)}{L_y}
- \lambda \tau_x + \lambda \int_V{T(\vect{u}_0,\vect{v}_0)\tau_x ( x_0, \vect{u}_0) \dd \vect{u}_0} \\ 
&\hspace{3cm}= -\left( 1 + \lambda \int_V 
T(\vect{u}_0,\vect{v}_0) K(\vect{u}_0,\vect{v}_0) \dd \vect{u}_0\right)\,. 
\end{alignedat}
\end{equation}
In the case where $T$ is the unbiased, fixed-speed, 2-dimensional turning kernel
given by \eqref{eq:turningKernel} and using (\ref{eq:DelayKernel}), 
we have $\vect{v}_0 = (v_0^{(x)},v_0^{(y)}) 
= s (\cos\theta,\sin\theta)$ and we can evaluate the second integral 
term in equation (\ref{unEval}) explicitly to be
\begin{equation*}
\int_V T(\vect{u}_0,\vect{v}_0) \, K(\vect{u}_0,\vect{v}_0) \, \dd \vect{u}_0 
= \int_{\theta - \pi}^{\theta + \pi} 
\frac{1}{2\pi} \frac{1}{\omega}|\theta - \theta_*|\dd\theta_* =
 \frac{\pi}{2\omega}.
\end{equation*}
Then (\ref{unEval}) can be rewritten as follows
\begin{equation}\label{RevExitTimeProb}
s\cos(\theta)\frac{\p \tau_x}{\p x_0} - \lambda \tau_x 
+
\frac{\lambda}{2 \pi} \int_{-\pi}^{\pi} \tau_x ( x_0, \phi) \dd \phi 
= -
\left(1 + \frac{\pi \lambda}{2\omega} - 
\frac{2 \, s \, A(\theta)}{L_y \, \omega}
\right),
\end{equation}
where $A(\theta)$ is defined by
$$
A(\theta) =
\left\{ 
\begin{array}{ll}
- (\pi + \theta ) \sin(\theta), \qquad &\mbox{for} \quad \theta \in 
(-\pi, -\pi/2], \\
\theta \sin(\theta), \qquad &\mbox{for} \quad \theta \in [-\pi/2, \pi/2], \\
(\pi - \theta ) \sin(\theta), \qquad\qquad\qquad &\mbox{for} \quad \theta \in 
[\pi/2,\pi]. \\
\end{array}
\right.
$$
\changed{Interestingly, the contribution of free turning on the right-hand
side of (\ref{RevExitTimeProb}) is given as $\pi\lambda/(2\omega)$, which can 
be explained using a simple averaging argument, because 
every tumble takes an average time of $\pi/(2/omega)$.}

The boundary conditions (\ref{tauboundaryconditions}) simplify to
\begin{equation} 
\label{tauboundaryconditionssim}
\begin{alignedat}{2}
\tau_x(L_x,\theta) &= 0\,,&\hspace{0.5cm}& \theta \in (-\pi/2, \pi/2)\,, \\ 
\tau_x(0,\theta) &= \tau_x(0,\pi -\theta) + 
\displaystyle \frac{\pi - 2 |\theta|}{\omega}\,,&&\theta \in (-\pi/2, \pi/2)\,. 
\end{alignedat}
\end{equation}
The numerical solution of (\ref{RevExitTimeProb})--(\ref{tauboundaryconditionssim}) can 
be further simplified by considering the symmetry in angle
$\tau_x(x_0, \theta ) = \tau_x(x_0,-\theta)$, i.e. it is sufficient to 
solve (\ref{RevExitTimeProb}) where $(x_0, \theta)$ are restricted to 
the domain $(0,L_x) \times (0,\pi)$ with boundary conditions 
(\ref{tauboundaryconditionssim}).

\subsubsection{Comparison between theory and experiments: loss of mass over 
time}
\changed{
In this section, we show that the transport theory with delays better explains 
the
experimental data with robots by considering the loss of mass over time, as we
did in Section \ref{subsec:comparison1}. 
In Figure~\ref{fig:comparison2}\subref{subfig:MassOverTime2}, we plot the mass remaining in the 
system 
against time. The solid (black) line represents the experimental data, whilst the 
results of the classical theory are shown as dotted (red) line. The dashed (blue) line shows a 
numerical 
solution of system \eqref{eq:restingstate:p}--\eqref{eq:restingstate:r}
that incorporates the finite reorientation time into the analysis. The 
numerical solution 
was achieved using a first order finite-volume method paired with an upwind 
scheme 
for \eqref{eq:restingstate:r}. For \eqref{eq:restingstate:p} we used 
$\Delta x = 1.183\,\text{m}/200$, $\Delta t = 10^{-3} \,\text{sec}$ and 
$\Delta \theta = \pi/20$. 
For \eqref{eq:restingstate:r} we used the same 
$\Delta t = 10^{-3} \,\text{sec}$ 
and 
a discretisation of $\Delta \eta = 3.38 \times 10^{-2}\,\text{sec}$ 
corresponding to the time 
it takes to turn from one velocity direction to the next.
Figure~\ref{fig:comparison2}\subref{subfig:MassOverTime2} demonstrates that 
the inclusion 
of turning delays provides an improved match to the experimental data.
}
\subsubsection{Comparison between theory and experiments: mean exit time problem}
\changed{
The mean exit time problem from Section \ref{subsec:comparison1b} can also be 
better modelled by the transport equation theory with suitable incorporated
delays as is demonstrated in Figure~\ref{fig:comparison2}\subref{subfig:MeanExitTime2}. 
The solid (black) line represents again the 
experimental data, whilst the classical results are shown as dotted (red) lines. 
The numerical solution of \eqref{RevExitTimeProb}
with the boundary conditions \eqref{tauboundaryconditionssim} is 
shown as the dashed (blue) line. This numerical solution was 
obtained using the same method as in Section~\ref{subsec:comparison1b} 
and we again plot the average over the initial 
pen as a bold dashed line. \changed{The bold dashed line indicates a predicted mean exit 
time of $152.43\,\text{sec}$  compared to the experimental value $156.74\,\text{sec}$, 
an error of approximately $2.7\%$ or $4.31\,\text{sec}$. This represents a strong 
improvement to the discrepancy of $12.3\%$ seen for the model that neglected the 
turning events (dotted red line) and goes to show that turning times are indeed 
non-negligible and can be built into our model in a consistent manner.}
}
\section{\changed{Incorporation of a signal gradient}}
\label{sec:signal}
In this section, we are aiming to formulate velocity jump models that incorporate changing turning frequencies $\lambda$.
In particular, we are interested in turning frequencies that depend on the current velocity of the robot as well
as its position in the domain, i.e. $\lambda = \lambda(\vect{x}, \vect{v})$. The general velocity jump model for this case
can be formulated as (cf. \eqref{eq:classicalvjp})
\begin{equation}\label{eq:gradient:VJP}
	\pd{p}{t} + \vect{v}\cdot\nablax p = -\lambda(\vect{x}, \vect{v})\,p + \int_V \lambda(\vect{x}, \vect{u})\,
	T(\vect{v}, \vect{u})\,p(t, \vect{x}, \vect{u}) \DD{\vect{u}}\,,
\end{equation}
with the boundary conditions given in \eqref{eq:classicalBC}. Similarly, we can formulate this system by
incorporating the resting period (cf. \eqref{eq:restingstate:p}--\eqref{eq:restingstate:r})
\begin{equation}\label{eq:gradient:restingstate}
\begin{alignedat}{3}
	&\pd{p}{t} &&+ \vect{v}\cdot \nablax p && = -\lambda(\vect{x}, \vect{v})\,p(t, \vect{x}, \vect{v}) + r(t, \vect{x}, \vect{v}, 0^+)\,,\\
	&\pd{r}{t} &&- \pd{r}{\eta} && = \int_V \lambda(\vect{x}, \vect{u})\,p(t, \vect{x}, \vect{u})\, T(\vect{v}, \vect{u*})\, \delta(\eta - K(\vect{v}, \vect{u}))\DD{\vect{u}}\,,
\end{alignedat}
\end{equation}
with boundary conditions \eqref{eq:robotics:restingstate:BC}. The system \eqref{eq:gradient:restingstate} can again
be formulated in the form of a delay differential equation (cf. \eqref{eq:delayDE})
 \begin{equation}\label{eq:robotics:signal:delayDE}
	\frac{\partial p}{\partial t} +\vect{v}\cdot\nablax p 
	= -\lambda(\vect{x},\vect{v})\, p 
	+ \int_V\lambda(\vect{x}, \vect{u})\,T(\vect{v},\vect{u})\,p(t - K(\vect{v},\vect{u}), \vect{x}, \vect{u})\DD{\vect{u}}\,,
\end{equation}
where boundary conditions take the form \eqref{eq:delayBC}. Similarly to the derivation in Section~\ref{subsec:turning:mfpt},
one can derive the backwards problem, with the mean first passage time equation taking the form (cf. \eqref{eq:meanExitTime})
\begin{equation}
	\label{eq:robotics:signal:MFPT}
	\begin{alignedat}{1} 
	& \vect{v_0}\cdot\nabla_{\vect{x_0}} \tau - \lambda(\vect{x_0}, \vect{v_0}) \tau + 
	\lambda(\vect{x_0}, \vect{v_0}) \int_V T(\vect{u},\vect{v_0})\,\tau ( \vect{x_0}, \vect{u}) \DD{\vect{u}} \\ 
	& \hspace{2cm} = - \left( 1 +  \lambda(\vect{x_0}, \vect{v_0}) \int_V T(\vect{u},\vect{v_0})\, K(\vect{u},\vect{v_0}) \DD{\vect{u}} \right)\,,
	\end{alignedat}
\end{equation}
with boundary conditions given in \eqref{eq:meanExitTimeBC}.

\subsection{Experiments with a signal gradient}

In order to compare these generalised velocity jump models to experimental results, 
we introduce an external signal 
into the robot experiments presented in Section~\ref{subsec:epuck}. The signal 
is incorporated in the form of a colour gradient
that can be measured by the light sensors on the bottom of the \epuck robots. 
The colour gradient is layed out in such
a way that it changes along the $x$-axis in Figure~\ref{fig:arenaSketch} with the 
darker end closer to the target area.
The reaction of the robots to this colour gradient is implemented using the internal
variable $z$ and a changing
turning frequency $\lambda(z)$ that are updated according to
\begin{equation}
\begin{alignedat}{2}
\frac{\D z}{\D t} & = \frac{S - z}{t_a}\,,\\
\lambda & = \lambda_0 + \lambda_0\left(1 - \alpha (S - z)\right)\,,
\end{alignedat}
\end{equation}
where $S\in[0, 1]$ represents the measured signal with increasing values of $S$ indicating 
a darker colour in the gradient.
The way the turning frequency is changed is motivated by models of bacterial chemotaxis 
\cite{Erban:2004:ICB}.

According to results from \cite{Erban:2005:STS}, a macroscopic density formulation 
for the robotic system is given
through the hyperbolic chemotaxis equation
\begin{equation}\label{eq:robotics:gradientChemo}
\frac{1}{\lambda_0}\pd{^2n}{t^2} + \pd{n}{t} 
= \frac{s^2}{d\lambda_0}\Laplace n - 
\nabla\cdot\left(n \frac{\alpha\lambda_0 s^2t_a}{d\lambda_0(1+\lambda_0t_a)}\nabla S\right)\,,
\end{equation}
where $S:\Omega\mapsto\R$ indicates the colour gradient and $n(t,\vect{x})$ 
describes the concentration of robots in $\Omega$.
Equation \eqref{eq:robotics:gradientChemo} can be approximated by the velocity 
jump process \eqref{eq:gradient:VJP}
with the form for the turning frequency given by
\begin{equation}\label{eq:robotics:signal:lambdaApp}
\lambda(\vect{x}, \vect{v}) = 
\lambda_0 - \gamma\,\vect{v}\cdot\nabla S(\vect{x})\,,\qquad \gamma = 
\frac{\alpha t_a\lambda_0}{1 + \lambda_0 t_a}\,.
\end{equation}
Because the gradient of the colour signal $S$ was chosen to be parallel to the $x$-axis 
in the experimental
setting, we can again simplify the formulation of the exit time problem 
\eqref{eq:robotics:signal:MFPT} by averaging along the $y$-axis. The resulting equation 
takes the form

\begin{equation}\label{eq:robotics:signal:MFPT:simple}
s\cos\theta\,\pd{\bar{\tau}}{x} + 
\frac{2s\sin\theta}{L_y\omega} \min(\theta, \pi - \theta) - \lambda(x,\theta)\,\bar{\tau}
+ \frac{\lambda(x,\theta)}{\pi}\int_0^\pi\!\!\!\bar{\tau}(x, \theta_*)\DD{\theta_*} 
= -1 - \lambda(x, \theta)\frac{\pi}{2\omega}\,,
\end{equation}
where $\lambda(x, \theta)$ is given through
\begin{equation}\label{eq:robotics:signal:lambdaApp1D}
\lambda(x, \theta) = \lambda_0 - \gamma\, s\cos\theta\,\pd{S(x)}{x}\,.
\end{equation}
Because the colour changes linearly along the $x$-axis, we approximate the 
signal $S(x)$ by a linear function. The values
at the end-points were taken directly from robot measurements and hence, $S(x)$ takes
the form
\begin{equation}
S(x) = 0.23 + 0.39\frac{x}{L_x}\,,\qquad \pd{S}{x} \approx 0.33\unit{m}^{-1}\,.
\label{linapp}
\end{equation}
We will use this linear form of $S(x)$ for all comparisons between experimental 
data and the derived models.

\subsection{Comparison between models and experimental results}
\label{subsec:robotics:gradient:comparison}

\begin{figure}[ht]
\centering
\subfigure[]{
\includegraphics[width=0.45\textwidth, height=0.35\textwidth]{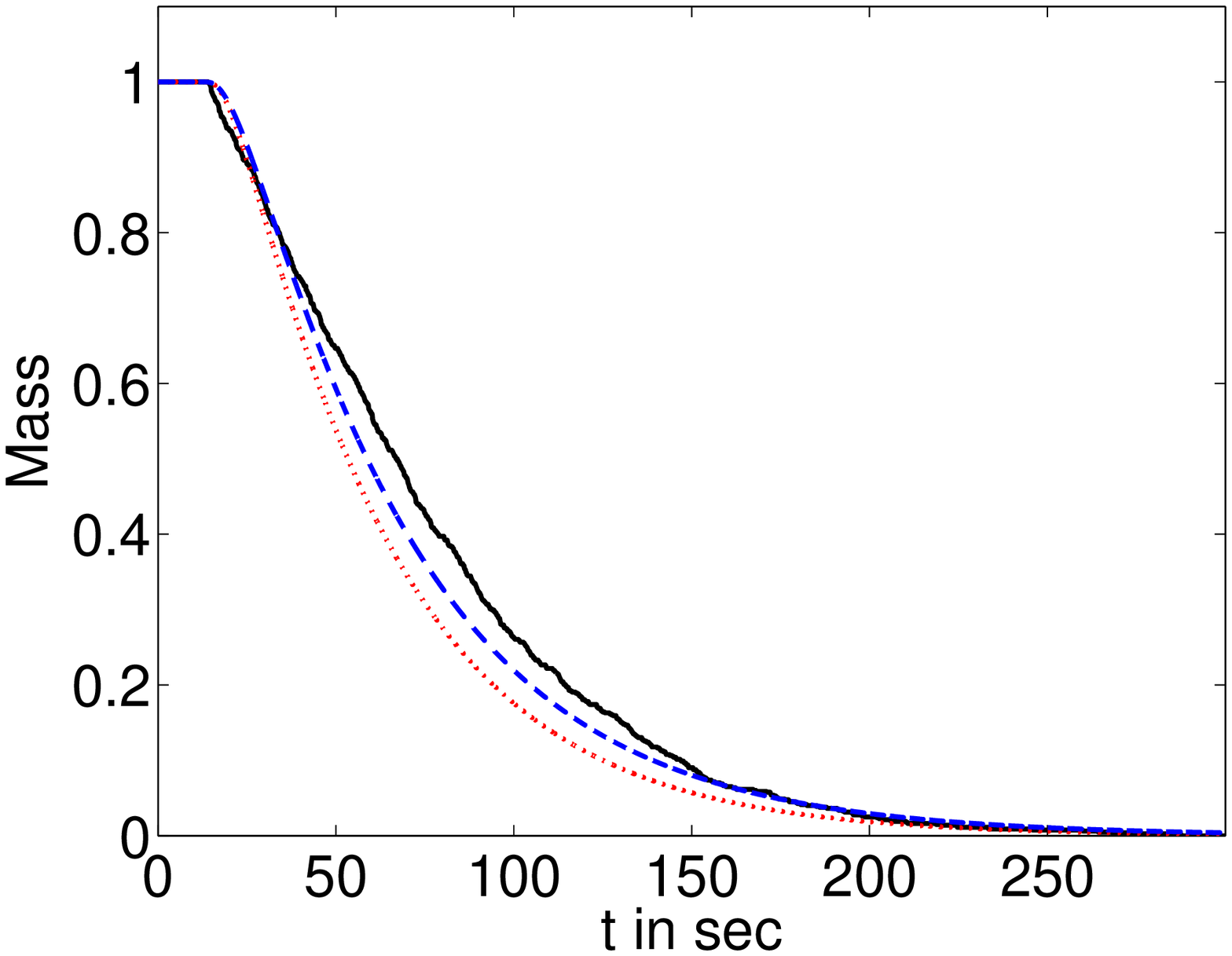}
}
\hspace{0.03\textwidth}
\subfigure[]{
\includegraphics[width=0.45\textwidth, height=0.35\textwidth]{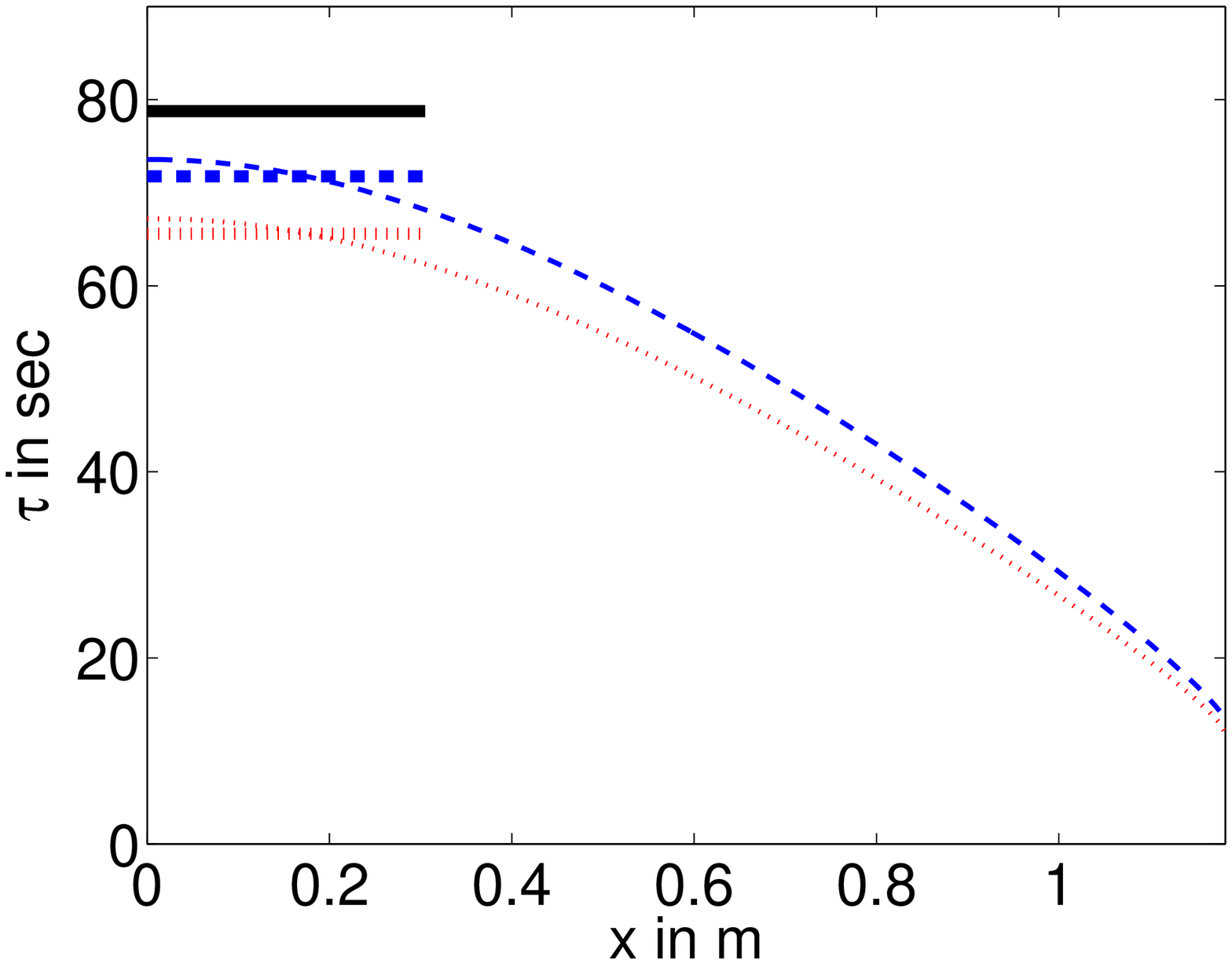}
}
\caption{{\it (Colour available online) 
Comparison between velocity jump process and experimental data for experiment 
including colour gradient.}
(a) {\it Mean mass in system over time. Solid line (black): experimental data;
dotted line (red): numerical solution of $\eqref{eq:gradient:VJP}$;
dashed line (blue): numerical solution of $\eqref{eq:gradient:restingstate}$. 
Turning frequency $\lambda(\vect{x}, \vect{v})$ as given
in $\eqref{eq:robotics:signal:lambdaApp}$.}\newline
(b) {\it Mean exit time averaged over all velocities. 
Solid line (black): experimental data;
dotted line (red): numerical solution of $\eqref{eq:robotics:signal:MFPT:simple}$ 
for $\omega = \infty$; 
bold dotted line (red): average of dotted line over $\Omega_0$;
dashed line (blue): numerical solution of $\eqref{eq:robotics:signal:MFPT:simple}$ 
for $\omega = 4.65\unit{rad}\unit{sec}^{-1}$;
bold dashed line (blue): average of dashed line over $\Omega_0$. Turning frequency 
$\lambda(x, \theta)$ as given in $\eqref{eq:robotics:signal:lambdaApp1D}$}\newline
{\it For both plots parameters and numerical methods are given in the text.}}
\label{fig:robotics:Gradient}
\end{figure}
We now want to compare the experimental data to the generalised 
velocity jump models presented in
\eqref{eq:gradient:VJP}--\eqref{eq:robotics:signal:lambdaApp1D}. 
The numerical solutions were achieved using 
the exact same methods and parameters as in Section~\ref{subsec:comparison1} and 
the results can be seen in Figure~\ref{fig:robotics:Gradient}. 
The parameter values used for the robots are
$\lambda_0 = 0.25\unit{sec}^{-1}$, $\alpha = 8$, 
$t_a = 10\unit{sec}$ and $s = 5.8 \times 10^{-2} \,\text{m}/\text{sec}$.
The experimental procedure was equivalent to the one 
presented in Section~\ref{sec:experiments}, i.e.
we repeated the experiment 50 times with 16 robots, 
each time waiting until all of the 16 robots
have left the arena.

In Figure~\ref{fig:robotics:Gradient}(a) we plot the mass 
left in the system over time. The 
solid (black) line represents the percentage of robots still 
in the arena at that point in time.
The dotted (red) line is a numerical solution of the velocity 
jump equation \eqref{eq:gradient:VJP} with 
the corresponding boundary conditions \eqref{eq:classicalBC}. 
The dashed (blue) line is a numerical solution 
of the velocity jump system with resting state given 
in \eqref{eq:gradient:restingstate} and boundary conditions 
as in \eqref{eq:robotics:restingstate:BC}.

In Figure~\ref{fig:robotics:Gradient}(b) we plot the 
mean exit time in dependence of position
along the $x$-axis. The horizontal solid (black) line again 
indicates the experimentally measured exit time of $78.77\unit{sec}$.
The dotted (red) line shows a numerical solution of 
\eqref{eq:robotics:signal:MFPT:simple} with instant turning, 
i.e. $\omega = \infty$.
The dashed (blue) line shows a numerical solution of 
\eqref{eq:robotics:signal:MFPT:simple} with 
$\omega = 4.65\unit{rad}\unit{sec}^{-1}$. For both of 
these solutions the boundary conditions are given 
in \eqref{eq:meanExitTimeBC}.
The bold horizontal lines again indicate the average over 
the initial pen $\Omega_0$.

In both plots in Figure~\ref{fig:robotics:Gradient}, we 
see that
the models including finite turning delays (represented 
through the dashed (blue) lines) give an improved match compared to the
models without this delay. 
The numerically estimated exit time for the model with instant 
turning ($\omega = \infty$) is $65.59\unit{sec}$ (error of
$16.7\%$ compared to experimental data); with finite turning 
times it is $71.76\unit{sec}$ (error of $8.9\%$).
The remaining difference between the models and the experimental 
data can be explained by noisy measurement
of the signal $S(\vect{x})$ as well as the fact that we used 
linear approximation (\ref{linapp}) averaged over all robots 
to obtain the numerical results. 
We can conclude from this brief study of robot experiments 
including a colour gradient signal that this signal indeed 
improves the target finding capacity of the robots and that 
the models developed in 
Section~\ref{sec:delay} can be generalised to incorporate 
turning frequencies that change according
to external signals.

\section{Discussion}
\label{secdiscussion}

In this paper, we have studied an implementation of a run-and-tumble searching
strategy in a robotic system. The algorithm implemented by the robots is 
motivated by a biological system -- behaviour of the flagellated bacterium
{\it E. coli}. Bio-inspired algorithms are relatively common in swarm
robotics. Algorithms based on behaviour of social insects have been
 implemented previously in the literature, see for example
\citet{GJJGACT,Krieger:2000:ATA,Webb:2000:WDR} and \citet{Fong:2003:SSI}.
One of the challenges of bio-inspired algorithms is that robots do not have
the same sensors as animals. For example, {\it E. coli} bias
their movement according to extracellular chemical systems. In biological
models, chemical signals often evolve according to the solution
of reaction-diffusion partial differential equations 
\citep{Franz:2012:HMI,Franz:2013:TWH}. Therefore, an implementation
of the full run-and-tumble chemotactic model in the robotic system requires 
either special sensors for detecting chemical signals, e.g. robots for 
odour detecting \citep{Russell:2001:SRA}, or replacing chemical signals 
by suitable caricatures of them, e.g. using glowing floor for 
\epuck robots \citep{Mayet:2010:AFV}. 

The main goal of this paper is to study how the mathematical 
theory developed for {\it E. coli} applies to the robotic system based 
on \epuckper{}s. Thus we do not focus on technological challenges 
connected with sensing changing chemical signals or their analogues 
\citep{Russell:2001:SRA,Mayet:2010:AFV}, \changed{ we do, however, incorporate
a constant signal in order to show that the developed theory works
for unbiased as well as biased velocity jump processes.} 
If the collisions between particles
(robots or bacteria) and reorientation times can be neglected, then this velocity jump process
is described by the transport equation (\ref{eq:classicalvjp}) 
\changed{or \eqref{eq:gradient:VJP} (in the biased case)}
and the long time behaviour is given by a drift-diffusion 
equation \citep{Hillen:2000:DLT}. In Section \ref{subsec:collisions}
we show that collisions between robots are negligible in
our experimental set up. However, we still observe quantitative
differences between the results based on the transport equation 
(\ref{eq:classicalvjp}) and robotic experiments.

In Section \ref{sec:delay} we identify turning delays as the
main mechanism contributing to differences between the 
mathematical theory developed for {\it E. coli} and the results
of experiments with \epuckper{}s. We introduce the resting state 
in equations \eqref{eq:restingstate:p}--\eqref{eq:restingstate:r}
and then derive the transport equation with delay
(\ref{eq:delayDE}). Our delay term is different from models
of tumbling of {\it E. coli}, because the underlying physical
process is different. Tumbling times of {\it E. coli}
are exponentially distributed, i.e. they can be explicitly
included in mathematical models by using transport equations which 
take into account probabilistic changes to and from the resting 
(tumbling) state \citep{Erban:2004:ICB}. In the case of robots, 
the turning time depends linearly on the turning 
angle. The selection of new direction is effectively instant and
the main contributing factor to turning delays is the finite 
turning speed of robots. \changed{In Section~\ref{sec:signal}
we apply the developed theory to an experiment incorporating
an external signal and show that similar transport equations
can be developed for this situation.}

We have studied a relatively simple searching algorithm 
motivated by \emph{E. Coli} behaviour, but the transport
equations and velocity jump processes naturally appear in 
modelling of other biological systems, such as modelling 
chemotaxis of amoeboid cells \citep{Erban:2007:TEA} or swarming 
behaviour as seen in various fish, birds and insects 
\citep{COP,Erban:2012:ICB}. We conclude that the same delay terms 
as in (\ref{eq:delayDE}) would be applicable whenever we implement
these models in \epuckper{}s. From a mathematical point of view, it
is also interesting to consider coupling of (\ref{eq:delayDE}) with 
\changed{changing extracellular signals}, because signal transduction also 
has its own delay which can be modelled using velocity jump models 
with internal dynamics \citep{Franz:2013:TWH,Erban:2004:ICB,Xue:2009:MMT}. 
\changed{Considering higher densities of robots, the transport equation
formalism needs to be further adapted to incorporate the effects
of robot-robot interactions. We have recently investigated this
problem and reported our results in \cite{Franz_2014}.} 

\section*{Acknowledgements}

The research leading to these results has received funding from
the European Research Council under the {\it European Community's}
Seventh Framework Programme {\it (FP7/2007-2013)} /
ERC {\it grant agreement} No. 239870; and from the Royal Society
through a Research Grant. Christian Yates would like to thank 
Christ Church, Oxford for support via a Junior Research Fellowship.
Radek Erban would also like to thank the Royal Society for
a University Research Fellowship; Brasenose College, University of Oxford,
for a Nicholas Kurti Junior Fellowship; and the Leverhulme Trust for
a Philip Leverhulme Prize.

{\small

}

\appendix
\renewcommand\thesection{Appendix \Alph{section}}

\section{Robot specifications}
\label{app:epuck}

A photo of a collection of \epuck robots and the arena are given 
in Figure \ref{CollPhotoArena}. Full details of the \epuck specifications 
are:

\vskip -6mm \rule{0pt}{0pt}

\begin{itemize}
\itemsep -1.3mm 
\item[i.]
Diameter: 75 mm.
Height: 50 mm.
Weight: 200g.
\item[ii.]
Speed throughout experiments: $5.8 \times 10^{-2} \,\text{m}\,\text{sec}^{-1}$, 
(max speed: $0.13 \,\text{m}\,\text{sec}^{-1}$).
\item[iii.]
Turning speed throughout experiments: $4.65 \,\text{rad}\,\text{sec}^{-1}$.
\item[iv.]
Processor: dsPIC 30 CPU @ 30 MHz (15 MIPS), (PIC Microcontroller.)
\item[v.]
RAM: 8 KB. Memory: 144 KB Flash.
\item[vi.]
Autonomy: 2 hours moving. 2 step motors. 3D accelerometers.
\item[vii.]
8 infrared proximity and light, (TCRT1000)
\item[viii.]
Colour camera, 640x480,
\item[ix.]
8 LEDs on outer ring, one body LED and one front LED,
\item[x.]
3 microphones, forming a triangle allowing the determination of the direction of audio cues.
\item[xi.]
1 loudspeaker.
\end{itemize}

\vskip -8mm \rule{0pt}{0pt}

\begin{figure}[ht]
\centerline{
\hskip 2mm
\raise 5.5cm \hbox{\raise 0.9mm \hbox{(a)}}
\hskip -5mm
\includegraphics[height=5.5cm]{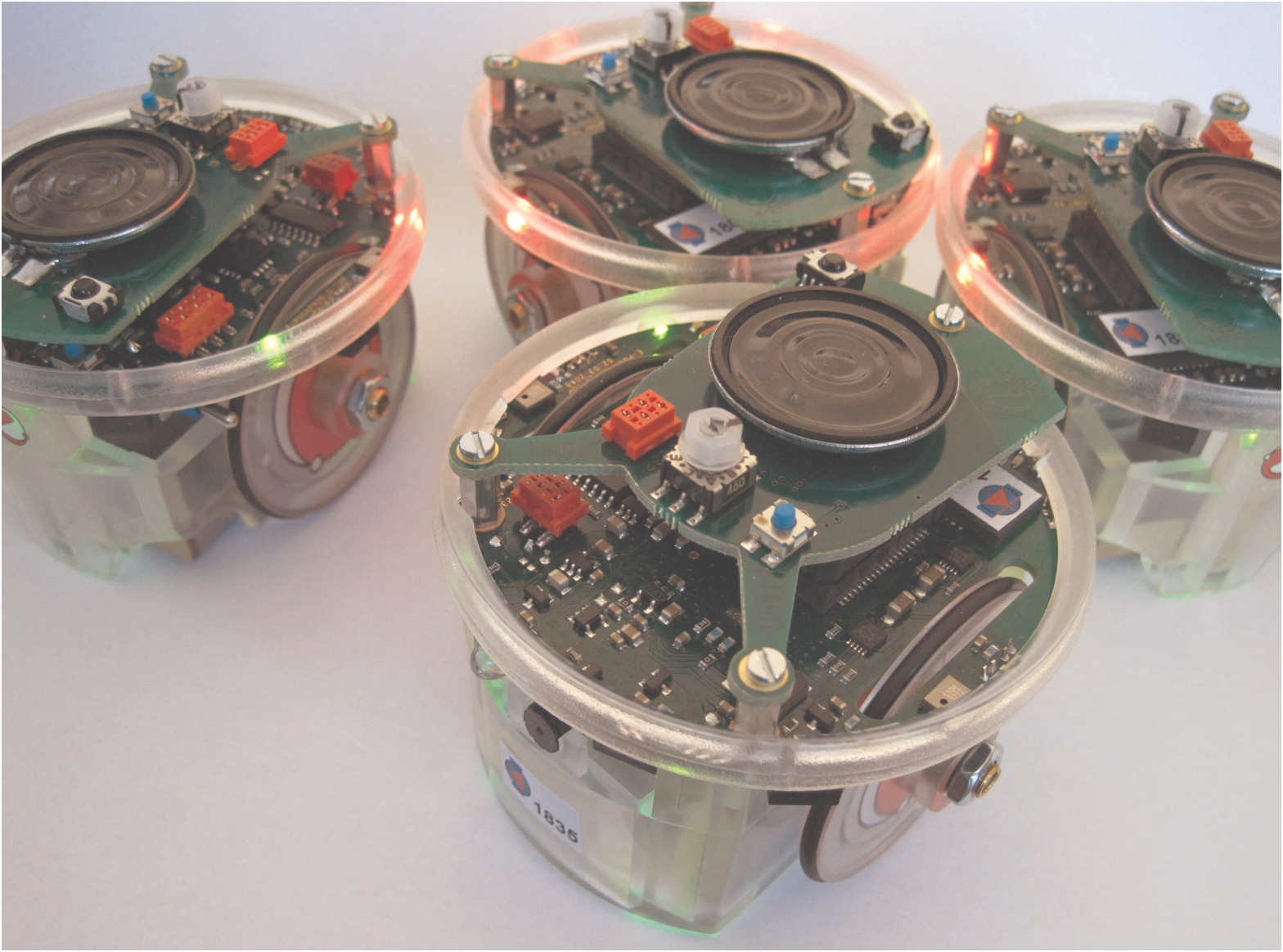}
\hskip 12mm
\raise 5.5cm \hbox{\raise 0.9mm \hbox{(b)}}
\hskip -5mm
\includegraphics[height=5.5cm]{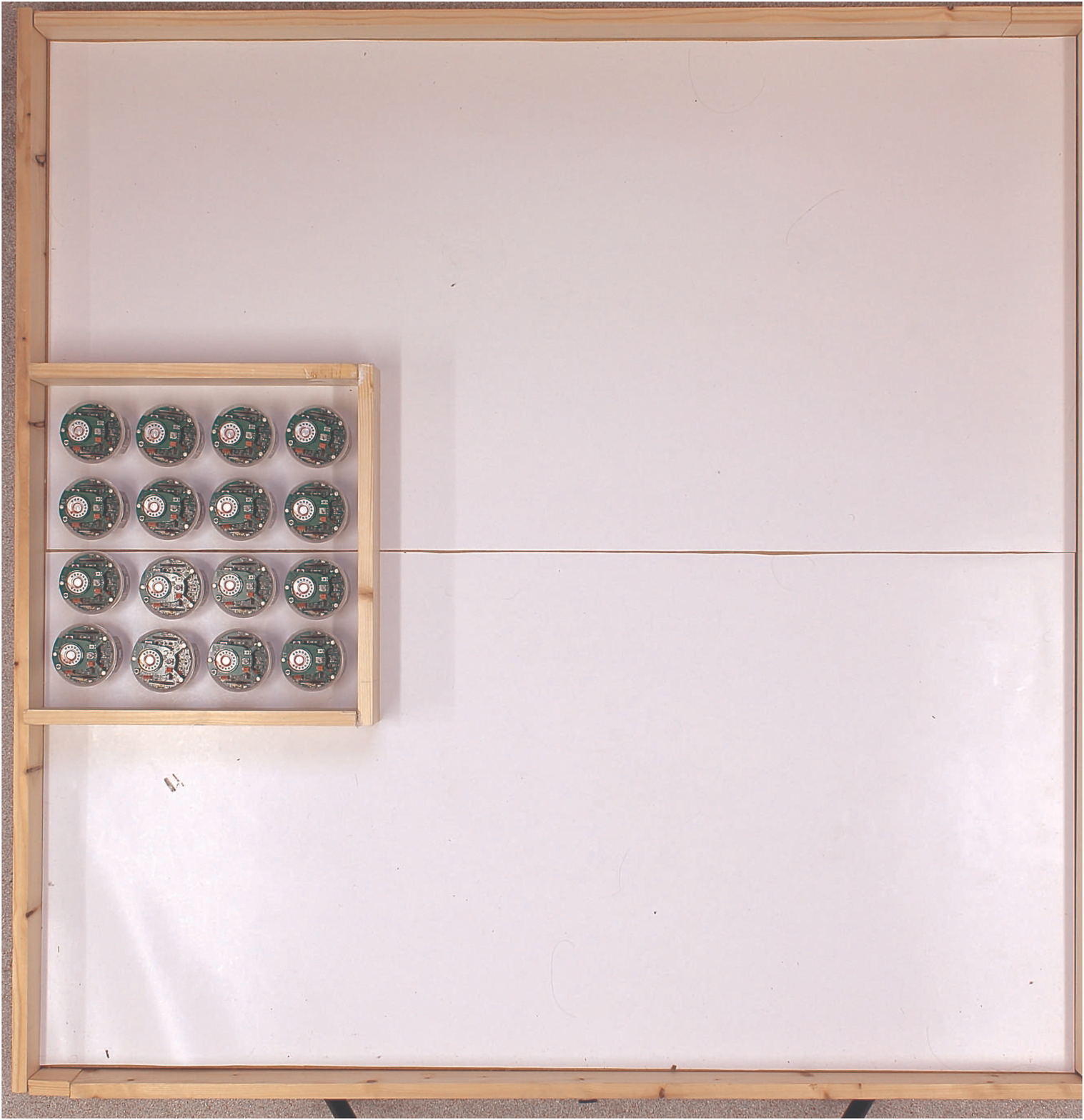}
}
\caption{(a) {\it Collection of \epuck robots.} 
(b) {\it Photo showing the arena built for the experiments, 
removable pen and edges made from cut down medium density fibre.}}
\label{CollPhotoArena}
\end{figure}

\section{Derivation of adjoint boundary condition (\ref{backwardbcs})}
\label{BCDERIVATION}

Using (\ref{boundarypartition}) and (\ref{velocitypartition}),
the last term in (\ref{pomeqMstar}) can be rewritten as follows
\begin{alignat*}{1}
&\int_{-\infty}^{\infty} 
\int_V
\int_{\p\Omega} 
p(t,\vect{x},\vect{v}) q(t,\vect{x},\vect{v}) 
[\vect{v} \cdot \vect{n} ]\, \dd S_x \, \dd\vect{v} \, \dd t
\\
&\hspace{1cm}
=
\int_{-\infty}^{\infty} 
\int_{\bdref \cup \bdtar} 
\int_{V^+} 
\big\{p(t,\vect{x},\vect{v}) q(t,\vect{x},\vect{v}) 
- p(t,\vect{x},\vect{v}') q(t,\vect{x},\vect{v}')  \big\} 
[\vect{v} \cdot \vect{n}] \, \dd\vect{v} \, \dd S_x  \, \dd t\,,
\end{alignat*}
where $\vect{v}'$ is given by (\ref{eq:reflection}).
Separating the above integral into the cases of 
$\bdref$ and $\bdtar$, and using the boundary condition
\eqref{eq:delayBC}, we have
\begin{alignat*}{1}
&
\int_{-\infty}^{\infty} 
\int_V
\int_{\p\Omega} 
p(t,\vect{x},\vect{v}) q(t,\vect{x},\vect{v}) 
[\vect{v} \cdot \vect{n} ]\, \dd S_x \, \dd\vect{v} \, \dd t
\\
&\hspace{1cm} = \int_{-\infty}^{\infty} 
  \int_{\bdref} 
  \int_{V^+} 
  \left\{
  p(t,\vect{x},\vect{v}) q(t,\vect{x},\vect{v}) 
  - p\left(t- K(\vect{v}',\vect{v}),\vect{x},\vect{v}\right) 
  q(t,\vect{x},\vect{v}') \right\} 
  [\vect{v} \cdot \vect{n}] \, \dd\vect{v} \, \dd S_x \, \dd t
  \\
&\hspace{1.4cm}  
-
\int_{-\infty}^{\infty} 
\int_{\bdtar} 
\int_{V^+} 
\left\{  
p\left(t,\vect{x},\vect{v}'\right) 
q\left(t,\vect{x},\vect{v}'\right)  
\right\} [\vect{v} \cdot \vect{n}] \, \dd\vect{v} \, \dd S_x  \, \dd t\,.  
\end{alignat*}
We shift the time variable in the first term on the right hand side to
deduce
\begin{alignat*}{1}
&
\int_{-\infty}^{\infty} 
\int_V
\int_{\p\Omega} 
p(t,\vect{x},\vect{v}) q(t,\vect{x},\vect{v}) 
[\vect{v} \cdot \vect{n} ]\, \dd S_x \, \dd\vect{v} \, \dd t
\\
&\hspace{1cm} = \int_{-\infty}^{\infty} 
  \int_{\bdref} 
  \int_{V^+} 
  p(t,\vect{x},\vect{v})
  \left\{  
  q(t,\vect{x},\vect{v}) - 
  q\left(t+ K(\vect{v}',\vect{v}),\vect{x},\vect{v}' \right)  
  \right\} [\vect{v} \cdot \vect{n}] \, \dd\vect{v} \, \dd S_x  \, \dd t \\
&\hspace{1.4cm}
-
\int_{-\infty}^{\infty} 
\int_{\bdtar} 
\int_{V^+} 
\left\{  
p\left(t,\vect{x},\vect{v}'\right) 
q\left(t,\vect{x},\vect{v}'\right)  
\right\} [\vect{v} \cdot \vect{n}] \, \dd\vect{v} \, \dd S_x  \, \dd t 
\,.
\end{alignat*}
The first term on the right hand side is zero
because $q(t+ K(\vect{v}',\vect{v}),\vect{x},\vect{v}') 
= q(t,\vect{x},\vect{v})$ in (\ref{backwardbcs}). 
The second term vanishes when $q(t,\vect{x},\vect{v}') = 0$.
Thus we conclude that the last term in (\ref{pomeqMstar}) 
is equal to zero when $q$ satisfies the boundary conditions 
(\ref{backwardbcs}).

\end{document}